\theoremstyle{plain} \numberwithin{equation}{section}
\newtheorem{theorem}{Theorem}[section]
\newtheorem{proposition}[theorem]{Proposition}
\newtheorem{corollary}[theorem]{Corollary}
\newtheorem{lemma}[theorem]{Lemma}
\newtheorem{definition}[theorem]{Definition}
\newtheorem*{theorem*}{Theorem}
\newtheorem*{proposition*}{Proposition}
\theoremstyle{remark}
\newtheorem*{example*}{Example}
\newcommand{\R}{\mathbb R}
\newcommand{\calP}{\mathcal{P}}
\begin{document}

\title{The critical locus of overparameterized neural networks}
\author{Y. Cooper}
\email{\href{mailto:yaim@math.ias.edu}{yaim@math.ias.edu}}

\maketitle

\begin{abstract}
Many aspects of the geometry of loss functions in deep learning remain mysterious.  In this paper, we work toward a better understanding of the geometry of the loss function $L$ of overparameterized feedforward neural networks.  In this setting, we identify several components of the critical locus of $L$ and study their geometric properties.  For networks of depth $\ell \geq 4$, we identify a locus of critical points we call the star locus $S$.  Within $S$ we identify a positive-dimensional sublocus $C$ with the property that for $p \in C$, $p$ is a degenerate critical point, and no existing theoretical result guarantees that gradient descent will not converge to $p$.  For very wide networks, we build on the work of \cite{quynhconnectedsublevel} and \cite{ruoyumeasure0} and show that all critical points of $L$ are degenerate, and give lower bounds on the number of zero eigenvalues of the Hessian at each critical point.  For networks that are both deep and very wide, we compare the growth rates of the zero eigenspaces of the Hessian at all the different families of critical points that we identify.  The results in this paper provide a starting point to a more quantitative understanding of the properties of various components of the critical locus of $L$.  
\end{abstract}

\section{Introduction}
The recent and remarkable success of neural networks is not yet well understood from a theoretical perspective.  A fruitful area of study has been the ``expressivity'' of deep neural networks.  That is, given a problem or data set, how large does your neural network need to be so that there exists a function in the corresponding parameter space $\calP$ that perfectly fits the training data set?

However, even if there exist parameters in $\calP$ that encode functions with zero training loss, it is far from clear if and when gradient based methods might find such parameters.  In most cases, the loss function $L$ is believed to be nonconvex, and under various assumptions, ``bad'' critical points have been proven to exist.  In other words, in most cases, we expect there to be many critical points that gradient descent could get stuck at, and still it appears that empirically gradient descent often finds global minima.

In this work, we are motivated by this remarkable fact.  Figure \ref{minima} illustrates the long term goal --- not only would we like to identify all the critical points of $L$, but we'd like to understand the geometry of the set $Crit$ of all critical points of $L$.  For example, it would be valuable to understand how many components $Crit$ has, what the dimension of each component is, and what the local geometry of $L$ is near each component of $Crit$.  In this paper, we make some of the first steps toward this goal, establishing fundamental facts about several components of the locus of critical points of $L$.

\begin{figure}\label{minima}
\begin{center}
\includegraphics[width=4.2in]{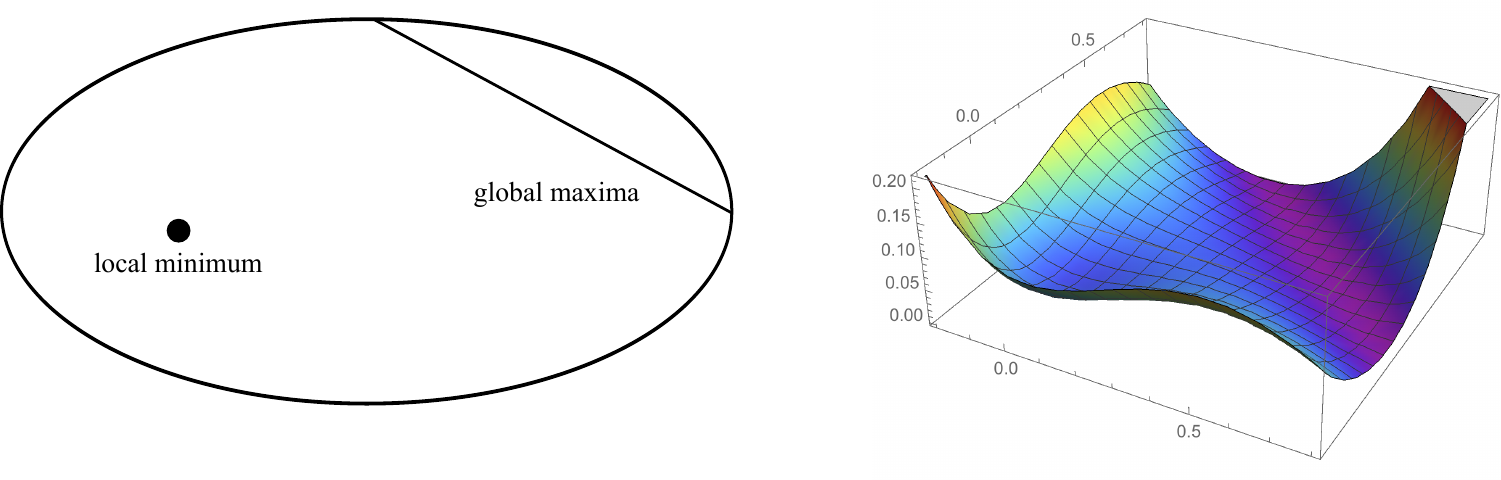}
\end{center}
\caption{The locus of critical points of this function has two components.  There's one isolated local minimum, and then there's a one dimensional smooth manifold of global minima.}
\end{figure}

\subsection{Previous results}
Not much is currently known about the critical locus of the loss function under assumptions that hold for real-world neural networks.  There is a substantial literature on the geometry of the loss function under strong assumptions that rule out many or all real-world networks, for example, that the activation function is the identity, or that the network is unrealistically wide compared to the number of training samples.

We begin by recalling some results about the critical points of $L$ that hold under assumptions mild enough to include real-world architectures.  The author studied the locus of global minima in \cite{cooper}, where under mild assumptions we showed that for feedforward networks, the locus of global minima of $L$ forms a smooth, possibly disconnected manifold $M$ and computed its dimension.  

Whether $M$ is connected has been studied both theoretically and empirically.  In \cite{quynhconnectedsublevel}, Ngyuen et al found that under stronger assumptions, it is possible to prove that the locus of global minima is in fact connected.  Working in a different setting, in\cite{venturibruna} Venturi et. al. also prove the connectedness of the locus of global minima under strong assumptions.  

In related work \cite{sanjeevepsilonconnected}, Arora et al showed that under a different but again strong set of assumptions, any two dropout-stable minima are $\epsilon$-connected.  It has also been empirically observed by several groups including \cite{LossSurfaces} and \cite{NoBarriers}, that if one trains a neural net twice, two different global minima are found, but that it is often possible to construct a path between them along which the loss does not increase much along the path.

The geometry of the locus of local non-global minima is much less well understood than that of global minima.  Kawaguchi showed that for deep linear networks, all local minima of the loss function are global \cite{NoPoor}.  However, no real-world neural networks use the identity function as the activation function $\sigma$, and once $\sigma$ is allowed to be nonlinear, we know very little about the local non-global minima of $L$.  Several groups, including \cite{srabadlocalmin}, \cite{spuriouscommon}, and \cite{ruoyumin}, have proven, under various sets of assumptions that are weak enough to include some real-world networks, that deep nonlinear neural networks always have spurious local minima.  

The geometry of $L$ near critical points that are neither minima nor maxima is perhaps the least understood aspect of the landscape of the loss function for deep nonlinear networks.  Very little is currently known about these other critical points of $L$, beyond the expectation that $L$ has nondegenerate saddle points and the fact that the origin is often a degenerate critical point.  

We pause here to recall what kinds of critical points a smooth function $L$ can have, and which of those we might expect gradient-based methods to find.  Critical points of $L$ fall into two broad categories --- nondegenerate critical points, where all the eigenvalues of the Hessian are nonzero, and degenerate critical points, where one or more eigenvalues of the Hessian vanishes.  While nondegenerate critical points are easily classified by the number of positive and negative eigenvalues of the Hessian, there is a rich zoo of degenerate critical points.  

As to which critical points gradient-based methods might converge to, one does not expect gradient descent from a random initialization to end at a local maximum, and one does expect sometimes to end at a local minimum.  The most subtle are saddle points, especially degenerate ones.  

In \cite{jordansaddle}, Jordan et. al. showed that gradient-based methods can efficiently escape any saddle point $p$ where at least one eigenvalue of the Hessian is negative.  However, beyond that, there is little understanding of which kinds of degenerate critical points gradient based methods can efficiently escape and which they cannot.  In this paper, we show that for any deep nonlinear network, overparameterized or not, $L$ has a positive dimensional locus of degenerate critical points which do not satisfy the assumptions of \cite{jordansaddle}, and which to our knowledge no existing results guarantee gradient descent won't get stuck at.

Given the fact then that $L$ contains degenerate critical points with geometries that haven't been studied from the point of view of gradient descent, a valuable line of future research would be to extend the work of \cite{jordansaddle} to larger classes of degenerate critical points.  One would like to establish which kinds of degenerate critical points can be problematic for gradient-based methods and which can kinds we can guarantee will not be.  It would also be valuable to understand which of the problematic kinds of degenerate critical points in fact appear as critical points of the loss function $L$.  

\subsection{Our contribution}
In this work, we establish some basic geometric properties of the locus of critical points of the loss function of deep nonlinear neural networks.  We treat all three classes of critical points that gradient descent could with positive probability converge to --- global minima, local non-global minima, and degenerate saddle points.  

For networks of depth $\ell \geq 3$, with smooth activation function satisfying $\sigma(0)=0$, whether overparameterized or not, we show the existence of a set $S$ we call the star locus, which is a positive dimensional locus of critical points (there may be others.)  

\medskip

\noindent
{\bf Theorem \ref{starlocus}}
Every point in $S$ is a critical point of $L$.  

Within the star locus we identify a set we call the core locus $C$, which is a positive-dimensional locus of degenerate critical points.  

\medskip

\noindent
{\bf Theorem \ref{corelocus}}
Every point in $C$ is a degenerate critical point.  Furthermore, for any point $p$ in $C$, the Hessian of $L$ at $p$ has one positive eigenvalue and the remaining eigenvalues all vanish.  

\medskip

\noindent
We go on to compute the dimension of both loci $S$ and $C$, and establish some basic properties of each.

In practice, modern neural networks are essentially always overparameterized, in the sense that the number of parameters $d$ is larger than the number of data points $n$ that the network is being trained on.  In this setting, in previous work \cite{cooper} we showed that the locus $M$ of global minima is generically a smooth manifold of dimension $d-bn$.  

Little more is known about neural networks which are overparameterized but to an extent comparable to the extent that networks used in practice are overparameterized.  However, there has been substantial recent progress in the theoretical understanding of networks much wider than those used in practice.  We call networks where each layer has width $m > n$ very wide, and those where the width $m$ of each layer grows as a polynomial in $n$ extremely wide.  

Our best theoretical understanding is for extremely wide networks, where work of \cite{dulee}, \cite{zhu}, \cite{ntk}, and many others proves that with high probability, gradient flow from random initializations converge to global minima.  Our theoretical understanding of very wide networks is more limited, but has also seen substantial progress in the past few years.  

In \cite{fullrank}, \cite{quynhconnectedsublevel}, \cite{ruoyumeasure0}, and \cite{venturibruna}, several groups showed in a number of different settings that the sublevel sets of $L$ are connected.  In particular, this implies that in this setting all critical points of $L$ are degenerate.  In this work, we build on their work and give lower bounds on the number of globally flat directions of the function $L$ at any critical point $p$.  
\medskip

\noindent
{\bf Theorem \ref{dimlinuniform}.}
Consider a very wide feedforward neural network with smooth activation $\sigma$ and $L2$ loss $L$.  For any critical point $p$ of $L$, the level set of $p$ contains a linear subspace of dimension $(m_{\ell-1}+1 - n) m_\ell$.  

\medskip

A corollary of this Theorem stated in the perhaps more familiar language of zero eigenvalues of the Hessian of $L$ is the following.

\medskip

\noindent
{\bf Corollary \ref{critzeroeigenuniform}.}
Consider a very wide feedforward neural network with smooth activation $\sigma$ and $L2$ loss $L$.  For any critical point $p$ of $L$, $Hess(L)$ at $p$ has at least $(m_{\ell-1}+1 - n) m_\ell$ zero eigenvalues.
\medskip

We also show that local minima of $L$ cannot be isolated in the set of critical points, meaning that if $p$ is a local minimum, any $\epsilon$ neighborhood of $p$ must contain another local minimum.

\medskip

\noindent
{\bf Proposition \ref{noisolatedmin}.}
For a fully connected feedforward neural network with width $m$ larger than the number of training samples $n$, continuous activation function $\sigma$ and L2 loss function $L$, $L$ has no isolated local minima.

\medskip

We show an analogous result for local maxima, but are unable to prove an analogous result for any other critical points, and it remains an open question whether with very wide networks $L$ can have isolated saddle points.  

Finally, we compare the growth rates of the dimension of the loci of all the types of critical points we discuss in this paper, as well as the growth rates of the zero eigenspaces of the Hessian of $L$ at each type.

\medskip

\noindent
{\bf Proposition \ref{dimloci}.}
Consider a family of feedforward neural networks with hidden layers of increasing width training on a fixed data set.  That is, let $a, b,$ and $n$ be fixed while $m$ increases.  Then the dimensions of the locus of global minima, star locus, and core locus, are:
\begin{align*}
\dim(M) &= (\ell-2)m^2 + (a+b+\ell-1)m + b(1-n),
\\
\dim(S) &=  (\ell-3)m^2 + (a+1)m,
\\
\dim(C) &=  (\ell-4)m^2 + (a+1)m,
\end{align*}
while the dimension of the locus of all critical points is unknown.

\medskip

Meanwhile,

\medskip

\noindent
{\bf Proposition \ref{dimzeroeigen}.}
Consider a family of feedforward neural networks with hidden layers of increasing width training on a fixed data set.  That is, let $a, b,$ and $n$ be fixed while $m$ increases.  Then the number of zero eigenvalues of the Hessian of $L$ at any critical point in the core locus, locus of global minima, or any critical point are:
\begin{align*}
\begin{cases}
d-1 & \text{for } p \in C,
\\
d-bn & \text{for } p \in M,
\\
\text{at least } (m+1- n) b & \text{for all other critical points } p.
\end{cases} 
\end{align*}

\medskip

\subsection{Outline of paper}

We begin, in Section \ref{setting}, by establishing the setting in which we will be working.  We then study the geometry of $L$ for neural networks of various sizes, starting with underparameterized networks and then working with increasingly overparameterized ones.

In Section \ref{deepnetwork} we consider all neural networks of depth $\ell \geq 3$, whether overparameterized or not.  In this setting, we identify two families of critical points, the star locus $S$, and the core locus $C$.  When$\ell \geq 3$ the star locus is positive dimensional, and once $\ell \geq 4$, the core locus is as well.  The core locus consists of critical points with nearly vanishing Hessian --- all but one eigenvalue is zero, and the one nonzero eigenvalue is positive.  Hence every point in this positive dimensional locus of critical points fails the assumptions of \cite{jordansaddle}, and we cannot from that work conclude that gradient descent will not converge to them.

Next, in Section \ref{overparam} we consider all neural networks with more parameters $d$ than data points $n$, and recall from earlier work that the locus of global minima forms a smooth possibly disconnected manifold of dimension $d-bn$.  

Finally in Section \ref{verywide}, we consider neural networks which are not only overparameterized, but where every layer has width greater than $n$.  This setting has been of substantial recent theoretical interest, and here we show that there are no isolated local minima or maxima.  We go on to give a lower bound on the number of zero eigenvalues of the Hessian at any local minimum or maximum.  For saddle points, we give a lower bound on the dimension of certain linear subspaces that contain the saddle point and are contained in the level set containing the saddle point.

We conclude in Section \ref{discussion} with a comparison, for networks in which the kinds of critical points discussed in this paper all appear simultaneously, of the relative dimensions of each type, and how they grow with the number of parameters in the network.  

\subsection{Acknowledgements}
We thank Misha Belkin, Nate Bottman, Rong Ge, Felix Janda, Chi Jin, Holden Lee, and Ruoyu Sun for helpful discussions.  We thank Quynh Ngyuen for valuable feedback on an early draft of this work.

\section{Setting and notation}\label{setting}

In this paper, we will consider the following setting.  Consider a fully connected feedforward neural network with $L2$ loss, and a monotonically increasing activation function $\sigma$.  Further, assume that $\sigma(0)=0$ and $\sigma$ is smooth, meaning it is infinitely differentiable. 

Suppose the neural network is training on a data set $D$ consisting of input output pairs $(x_k, y_k)$, $x_k \in \R^a$, $y_k \in \R^b$, and suppose there are $n$ data points in the training set.  

We assume that the neural network has $\ell-1$ hidden layers and each layer has width at least $m$.  Let the width of the $i^{th}$ layer be denoted $m_i$, so we assume each $m_i \geq m$ for $1 \leq i \leq \ell-1$.  Let $m_0 = a$ and $m_\ell = b$.  Let the number of parameters of the network be denoted by 
$$d = \sum_{i = 1}^{\ell} (m_{i-1} + 1)m_i.$$

\begin{figure}\label{netdiagram}
\begin{center}
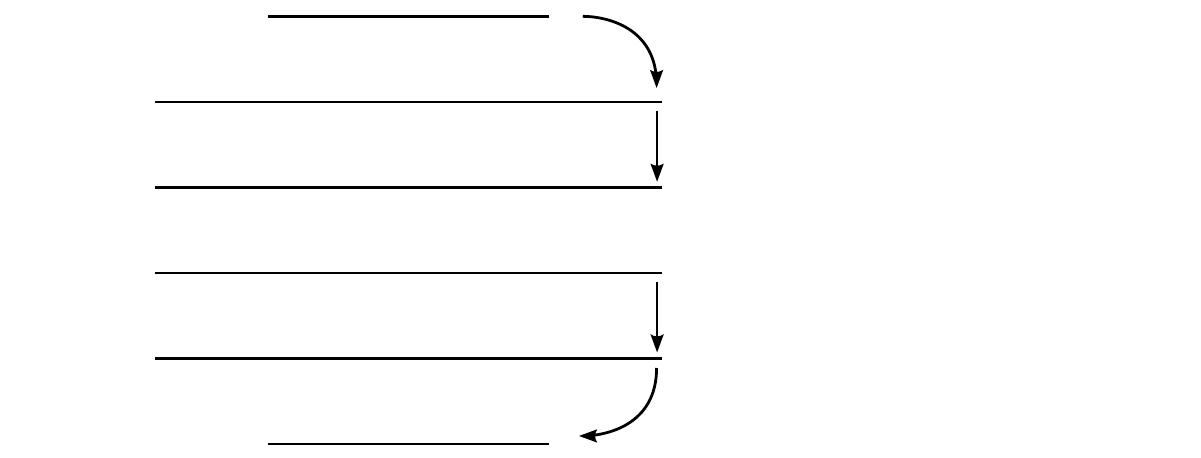
\end{center}
\caption{A schematic diagram of the feedforward networks we consider.}
\end{figure}

A fully connected feedforward network of this form parameterizes functions of the form
\begin{equation} \label{basicformula}
M_\ell(\sigma(M_{\ell-1}( ... ( \sigma (M_1 x + b_1) ... ) + b_\ell
\end{equation}
where each $M_i x + b_i$ is an affine linear transformation from $\R^{m_{i-1}}$ to $\R^{m_i}$.  

We sometimes denote the affine linear transformation $x \rightarrow M_i x + b_i$ by $A_i$.  In this notation (\ref{basicformula}) can be expressed as 
\begin{equation*}
A_\ell \circ \sigma \circ A_{\ell-1} \circ ... \circ \sigma \circ A_1(x).
\end{equation*}

In Figure \ref{netdiagram}, we diagram the feedforward neural network we have just described.  

The space of all weights and biases for all the layers is isomorphic to $\R^d$.  It will be helpful to distinguish the weights and biases for each layer, so let 
$$\R^{d_i} = \{ (M_{i-1}, b_{i-1} )\}$$ 
denote the space of weights and biases that determine the affine map from the $(i-1)^{st}$ to the $i^{th}$ layer.  Then $d_i = m_i (m_{i-1}+1)$ and
\begin{align*}
\R^d = \R^{d_1} \times ... \times \R^{d_\ell}.
\end{align*}
Given a parameter vector $p \in \R^d$, we can decompose it as 
\begin{align*}
p = (p_1, ..., p_\ell).
\end{align*}

The loss function $L$ is a function from $\R^d$ to $\R$ defined as
$$L(p) = \sum_{i = 1}^{n} (f_{p}(x_i) - y_i)^2,$$
where $p$ is a parameter vector and $f_p$ is the function computed by the neural network with that choice of parameters.  

Our interest is in understanding the geometry of the loss function $L$ of overparameterized neural networks, as the majority of modern neural networks deployed in practice today are overparameterized.  

\subsection{Overparameterized regimes}
We now outline three regimes of overparameterization.  The more overparameterized the neural network, the better our current understanding of the geometry of $L$.  

\subsubsection{Mildly overparameterized regime}
The most realistic and hence most interesting case is when the neural network is overparameterized in the simplest sense, that the number of parameters $d$ of the network is greater than the number of training points $n$, but not excessively so, i.e. that $d = O(n)$.  In this case, the width $m$ of the network is order $\sqrt{n}$.  We call this the mildly overparameterized regime.

\subsubsection{Very wide regime}
The second range we consider is the case that not only is the network overparameterized in the sense of having more parameters than data points, but that furthermore the width $m$ of the network is larger than the number of data points $n$.  We don't consider arbitrary width here, but rather the case that $m>n$ but that $m$ is still linear in $n$.  In this case, the number of parameters $d$ is quadratic in the number of data points $n$.

We call this the very wide regime.  It is wider than the networks used in practice, but not in an extreme way, and there is better theoretical understanding of the geometry of $L$ in this range than in the mildly overparameterized regime.

\subsubsection{Extremely wide regime}
The final range we describe is the most heavily overparameterized.  Here, we assume that the width $m$ of the neural network is polynomial in the number of data points $n$.  In this setting, many authors even take the infinite width limit and let $m$ go to $\infty$.

We call this the extremely wide regime, and it is much more overparameterized than any neural network used in practice.  This regime is unrealistic for real world neural networks, but on the other hand, this is the setting in which we have the best understanding of the geometry of $L$ as well as the dynamics of gradient descent on $L$.  So this case is also often of interest.

\section{Deep neural networks}\label{deepnetwork}
In this section, we establishing some basic facts about the loss function $L$ that hold for all deep neural networks, regardless of whether they are overparameterized or not.  

\subsection{Star locus}
For any fully connected feedforward network with $L2$ loss and smooth activation function satisfying $\sigma(0) = 0$, the space of all parameters $\R^d$ contains a positive dimensional locus of critical points we call the star locus, which contains a sublocus of degenerate critical points we call the core.  In this section we will identify these loci and prove some properties about them.

We begin by recalling some definitions.  Given a twice differentiable function $f: \R^d \rightarrow \R$, $p \in \R^d$ is a critical point if
$$
\left(\frac{\partial}{\partial z_i} f \right)(p) = 0
$$
for all $i$.  

The Hessian of $f$ is defined as the matrix
$$
Hess(f) =
\begin{pmatrix}
\frac{\partial^2}{\partial z_1^2} f & \dots & \frac{\partial }{\partial z_1}\frac{\partial}{\partial z_d} \\
\vdots & & \vdots \\
\frac{\partial }{\partial z_d}\frac{\partial }{\partial z_1} f & \dots & \frac{\partial^2}{\partial z_d^2} f \\
\end{pmatrix}
$$

A point $p$ is a degenerate critical point if all the derivatives of $f$ vanish at $p$ and in addition the Hessian of $f$ does not have full rank at $p$.  In other words,
$$det(Hess(f)) (p) = 0.
$$

Finally, we say that $p$ is a critical point of order $k$ if all of the derivatives of $f$ up to order $k$ vanish at $p$.  For example, if $p$ is a critical point and the Hessian of $f$ at $p$ is identically zero, then $p$ is a critical point of order at least 2.  
 
\subsubsection{Description}
Consider a feedforward neural network as above,
$$A_\ell \circ \sigma \circ ... \circ \sigma \circ A_1 = M_\ell(\sigma(M_{\ell-1}(.... ( \sigma (M_1 x + b_1) ... ) + b_\ell.$$

\noindent
For any $1 \leq k \leq \ell-1$ we define the locus $S_k$ by the following formula:
\begin{align*}
S_k
=
\left\{
p = (M_1,b_1,\ldots,M_\ell,b_\ell)
\:\left|\:
M_\ell = M_k = 0,
b_\ell = \sum_{\alpha=1}^n y_\alpha,
b_k=\cdots=b_{\ell-1}=0
\right.\right\}.
\end{align*}

\noindent
We define the star locus $S$ as the union of all these linear subspaces $S_k$, for $1 \leq k \leq \ell-1$.  

Note that the dimension of the star locus is 
\begin{align*}
\dim(S) = \text{Max}_{1 \leq i \leq \ell-1} d - \left( (m_{\ell-1}+1)m_\ell + m_{i-1}m_i + \sum_{j=i}^{\ell-1} m_j \right)
\end{align*}

For each choice of $m$ integers $1 \leq k_1 \leq \cdots \leq k_m \leq \ell-1$, we define the locus $C_{k_1, \ldots, k_m}$ by the following formula:
\begin{gather}
C_{k_1,\ldots,k_m}
=
\left\{\left.
p = (M_1,b_1,\ldots,M_\ell,b_\ell)
\:\right|\:
\eqref{eq:C_conds}
\right\},
\nonumber
\\
M_\ell = 0,
b_\ell = \sum_{\alpha=1}^n y_\alpha,
M_{k_1}=\cdots=M_{k_m}=0,
b_{k_1}=\cdots=b_{\ell-1}=0.
\label{eq:C_conds}
\end{gather}

\noindent
For each $m$, we define the $m$-core $C_m$ as the union over all $m-$tuples $k_1, ..., k_m$ of $C_{m_1, ..., m_k}$.  Note that the star locus $S$ is equal to the $1$-core $C_1$. 

Note that 

$$(0,...,0) = C_\ell \subset ... C_k \subset C_{k-1} ... C_2 \subset C_1 = S$$
and that each $C_k$ is the union of linear subspaces.  We call $C_2$ the core locus, and denote it by $C$.  This locus will be of particular interest to us, and it contains all the higher cores $C_3, C_4$, and so on.

\begin{figure}\label{starcore}
\begin{center}
\includegraphics[width=2.2in]{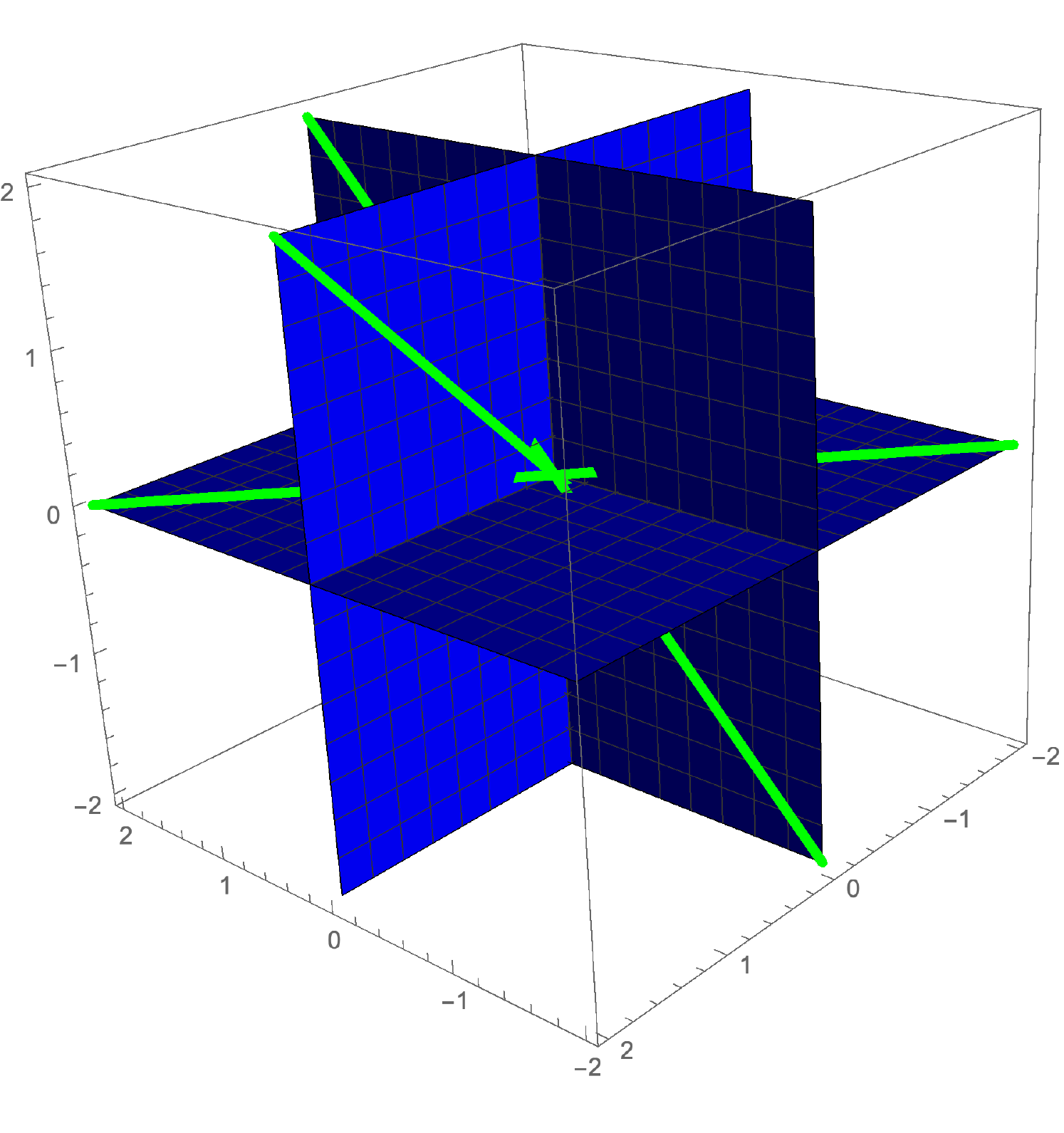}
\end{center}
\caption{The star locus is a union of linear spaces, and looks something like the blue set in this image.  The core locus is a subset of the star locus, is also a union of linear spaces, and looks something like the green set in this image.}
\end{figure}

Note that the dimension of the core locus is 
\begin{align*}
dim(C) = \text{Max}_{1 \leq i<j \leq \ell-1} d - \left( (m_{\ell-1}+1)m_\ell + m_{i-1}m_i + m_{j-1}m_j + \sum_{k=i}^{\ell-1} m_k \right)
\end{align*}

\subsubsection{Vanishing of derivatives}

Now, we will prove that for deep networks, every point in the star locus $S$ is a critical point and every point in the core locus $C$ is a degenerate critical point.

\begin{theorem} \label{starlocus}
Given a fully connected feedforward network of depth $\ell \geq 3$, with smooth activation function satisfying $\sigma(0) = 0$ and L2 loss function $L$, every point in the star locus $S$ is a critical point.  
\end{theorem}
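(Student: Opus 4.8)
\textit{Proof plan.} Since the star locus is the union $S=\bigcup_{k=1}^{\ell-1}S_k$ and being a critical point is a property of the point of $\R^d$ alone, it suffices to fix an index $k$ with $1\le k\le \ell-1$ and show that every $p=(M_1,b_1,\ldots,M_\ell,b_\ell)\in S_k$ satisfies $\nabla L(p)=0$. So fix such a $p$, with $M_k=M_\ell=0$, $b_k=\cdots=b_{\ell-1}=0$, and $b_\ell$ equal to the prescribed value. The plan is to analyze the forward pass and the backward pass of the network at $p$ separately, and then combine them to read off the gradient of $L$.

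First I would run the forward pass. Write $a^{(j)}$ for the pre-activation and $h^{(j)}=\sigma(a^{(j)})$ for the activation of layer $j$, so $a^{(j)}=M_jh^{(j-1)}+b_j$ and $f_p(x)=a^{(\ell)}$. The conditions $M_k=0$, $b_k=0$ give $a^{(k)}=0$; then $\sigma(0)=0$ forces $h^{(k)}=0$, and since $b_{k+1}=\cdots=b_{\ell-1}=0$ a straightforward induction yields $a^{(j)}=h^{(j)}=0$ for all $k\le j\le\ell-1$. In particular $h^{(\ell-1)}(x_i)=0$ for every data point, and $f_p(x)=M_\ell h^{(\ell-1)}+b_\ell=b_\ell$ for every input $x$, so $f_p$ is the constant function with value $b_\ell$. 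Next I would run backpropagation. Let $\delta^{(j)}=\partial f_p/\partial a^{(j)}$, so $\delta^{(\ell)}=I_b$ and $(\delta^{(j)})_{c,r}=\sigma'(a^{(j)}_r)\sum_s(\delta^{(j+1)})_{c,s}(M_{j+1})_{s,r}$. Because $M_\ell=0$ we get $\delta^{(\ell-1)}=0$, and descending the recursion gives $\delta^{(j)}=0$ for all $1\le j\le\ell-1$, independently of the values of $\sigma'$. Combining the two passes, the only parameter-derivatives of $f_p$ at $p$ that survive are those with respect to $b_\ell$: indeed $\partial f_p/\partial b_j=\delta^{(j)}=0$ and $\partial f_p/\partial M_j=\delta^{(j)}\otimes h^{(j-1)}=0$ for $j\le\ell-1$, while $\partial f_p/\partial M_\ell=\delta^{(\ell)}\otimes h^{(\ell-1)}=0$ since $h^{(\ell-1)}=0$, and $\partial f_p/\partial b_\ell=\delta^{(\ell)}=I_b$ (note this last one does not depend on the input).

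Finally I would assemble the gradient of $L$. From $\frac{\partial L}{\partial\theta}(p)=2\sum_{i=1}^n(f_p(x_i)-y_i)\cdot\frac{\partial f_p(x_i)}{\partial\theta}$ and the previous paragraph, every coordinate $\theta$ lying outside the block $b_\ell$ contributes $0$. For $\theta$ a coordinate of $b_\ell$ one gets $2\sum_{i=1}^n(f_p(x_i)-y_i)=2\sum_{i=1}^n(b_\ell-y_i)$, which vanishes precisely because $b_\ell$ is chosen so that the residuals $b_\ell-y_i$ sum to zero over the data set. Hence $\nabla L(p)=0$, so $p$ is a critical point of $L$; as $p\in S_k$ and $k$ were arbitrary, every point of $S$ is a critical point.

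I do not expect a genuine obstacle here — the argument is essentially bookkeeping — but the point that needs care is recognizing the division of labor among the defining conditions of $S_k$. The vanishing of $M_\ell$ is what kills the entire back-propagated error signal $\delta^{(1)},\ldots,\delta^{(\ell-1)}$, and hence all derivatives of $L$ coming from layers $1,\ldots,\ell-1$; the vanishing of $M_k$ together with $\sigma(0)=0$ and $b_k=\cdots=b_{\ell-1}=0$ is exactly what forces $h^{(\ell-1)}=0$, which is needed to also kill $\partial f_p/\partial M_\ell$; and the prescribed value of $b_\ell$ is precisely what is needed to annihilate the single remaining family of derivatives, those with respect to $b_\ell$. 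A secondary point worth stating explicitly is that at points of $S_k$ all of these computations are independent of the inputs $x_i$, so no genericity hypothesis on the data $D$ is required.
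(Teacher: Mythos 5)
Your proof is correct and follows essentially the same route as the paper's: (a) derivatives of $f_p$ with respect to parameters of layers $1,\ldots,\ell-1$ vanish because $M_\ell = 0$ kills the back-propagated signal, (b) the derivative with respect to $M_\ell$ vanishes because the conditions $M_k=0$, $b_k=\cdots=b_{\ell-1}=0$ and $\sigma(0)=0$ force the last hidden activations $h^{(\ell-1)}(x_i)$ to zero, and (c) the derivative with respect to $b_\ell$ vanishes by the choice of $b_\ell$. The paper reaches (a) via its Lemma \ref{notell}, stated somewhat informally in terms of which symbolic derivatives can fail to vanish, while you encode the identical observation in the standard backpropagation $\delta$-recursion, which is arguably cleaner and makes it explicit that the vanishing of $\delta^{(j)}$ for $j\le\ell-1$ is uniform in $\sigma'$ and in the input.

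One detail worth flagging: for step (c) you correctly observe that what is needed is $\sum_{i=1}^n (b_\ell - y_i) = 0$, i.e. $b_\ell = \tfrac{1}{n}\sum_\alpha y_\alpha$. The paper's displayed definition of $S_k$ writes $b_\ell = \sum_{\alpha=1}^n y_\alpha$, which does not make $\partial L/\partial b_\ell$ vanish unless $n=1$; this appears to be a missing factor of $1/n$ in the paper's formula, which your argument implicitly corrects. The rest of your bookkeeping — in particular the note that all the needed vanishing is uniform in the data $x_i$ so no genericity hypothesis is used — is accurate.
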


\begin{theorem} \label{corelocus}
Given a fully connected feedforward network of depth $\ell \geq 4$, with smooth activation function satisfying $\sigma(0) = 0$ and L2 loss function $L$, every point in the core locus $C$ is a degenerate critical point.  Furthermore, for any point $p$ in $C$, the Hessian of $L$ at $p$ has one positive eigenvalue and the remaining eigenvalues all vanish.  \end{theorem}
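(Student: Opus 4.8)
The plan is to compute the second-order Taylor expansion of $L$ at a point $p$ of the core locus directly from the layered structure of the network, using in an essential way that \emph{two} of the hidden weight matrices vanish. Since $C = C_2 \subseteq C_1 = S$, Theorem~\ref{starlocus} already gives that every $p \in C$ is a critical point, so the content is the statement about the Hessian. Fix $p \in C$ and choose $1 \le i < j \le \ell-1$ with $p \in C_{i,j}$, so that at $p$ we have $M_i = M_j = M_\ell = 0$, $b_i = b_{i+1} = \cdots = b_{\ell-1} = 0$, and $b_\ell = \bar y$ (the value prescribed in the definition of $S$). As in the proof of Theorem~\ref{starlocus}, $f_p \equiv \bar y$ is constant, $L(p) = \sum_{k=1}^n \|\bar y - y_k\|^2$, and criticality forces $\sum_{k=1}^n(\bar y - y_k) = 0$. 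Write $p' = p + \delta p$, let $z_t(x;\delta p)$ be the post-activation of hidden layer $t$ on input $x$ with parameters $p'$, and set $\Delta_x(\delta p) := f_{p'}(x) - \bar y$; since $M_\ell = 0$, $b_\ell = \bar y$ at $p$, we have $\Delta_x(\delta p) = \delta b_\ell + \delta M_\ell\, z_{\ell-1}(x;\delta p)$.

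The heart of the argument is the claim that $\Delta_x(\delta p)$ agrees, up to terms of order $\|\delta p\|^3$, with a quantity $\Delta(\delta p)$ that does not depend on $x$, with $\Delta(0)=0$ and linear part $\delta b_\ell$. First, because $\sigma(0)=0$ and $M_i = b_i = \cdots = b_{\ell-1} = 0$ at $p$, the forward pass at $p$ collapses: $z_i(x;0) = z_{i+1}(x;0) = \cdots = z_{\ell-1}(x;0) = 0$ for every $x$, hence $z_t(x;\delta p) = O(\|\delta p\|)$ for $i \le t \le \ell-1$. Next, differentiating the forward-pass recursion in the direction $\delta p$ at $\delta p = 0$, the derivative of the pre-activation of layer $t$ is $\delta M_t\, z_{t-1}(x;0) + M_t\,\partial_{\delta p}z_{t-1} + \delta b_t$. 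At $t = i$ this contributes the $x$-dependent term $\delta M_i\, z_{i-1}(x;0)$ with $z_{i-1}(x;0)$ generic, and that $x$-dependence is carried through layers $i+1,\dots,j-1$ by the generic matrices $M_{i+1},\dots,M_{j-1}$; but at layer $j$ one has simultaneously $M_j = 0$ (killing the $M_j\,\partial_{\delta p}z_{j-1}$ term) and $z_{j-1}(x;0) = 0$ (killing the $\delta M_j\, z_{j-1}(x;0)$ term), so $\partial_{\delta p}z_j = \sigma'(0)\odot\delta b_j$ is independent of $x$. For $t \ge j+1$ the only possible new source of $x$-dependence, $\delta M_t\, z_{t-1}(x;0)$, again vanishes since $z_{t-1}(x;0)=0$, so $\partial_{\delta p}z_{\ell-1}$ is independent of $x$. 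Finally, in $\Delta_x(\delta p) = \delta b_\ell + \delta M_\ell\, z_{\ell-1}(x;\delta p)$ the product $\delta M_\ell\, z_{\ell-1}$ is $O(\|\delta p\|^2)$ (as $z_{\ell-1}(x;0)=0$), its quadratic part equals $\delta M_\ell\cdot\partial_{\delta p}z_{\ell-1}$, which is independent of $x$, and its remainder is $O(\|\delta p\|^3)$; together with the linear term $\delta b_\ell$ this proves the claim.

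Granting the claim, $L(p') = \sum_{k=1}^n \|(\bar y - y_k) + \Delta(\delta p)\|^2 + O(\|\delta p\|^3) = L(p) + 2\langle \sum_{k=1}^n(\bar y - y_k),\,\Delta(\delta p)\rangle + n\|\Delta(\delta p)\|^2 + O(\|\delta p\|^3)$. The middle term vanishes identically because $\sum_k(\bar y - y_k) = 0$ (this both re-derives $\nabla L(p) = 0$ and annihilates the quadratic contribution of $\Delta$), and $n\|\Delta(\delta p)\|^2 = n\|\delta b_\ell\|^2 + O(\|\delta p\|^3)$ since the linear part of $\Delta$ is $\delta b_\ell$. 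Hence $L(p') - L(p) = n\|\delta b_\ell\|^2 + O(\|\delta p\|^3)$, so $Hess(L)(p)$ is $2n$ times the orthogonal projection onto the output-bias block $\{\delta b_\ell\}$ of $\R^d$ and is zero on all complementary directions. This form is positive semidefinite with positive part exactly the $b_\ell$-block — one positive eigenvalue — and all remaining eigenvalues equal to zero; in particular $\det Hess(L)(p) = 0$, so $p$ is a degenerate critical point, as claimed. (The requirement $\ell \ge 4$ enters only in making $C$ positive dimensional; the same computation applies for $\ell = 3$, where $C$ is a set of isolated points.)

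The step I expect to be the main obstacle is the claim of the second paragraph — precisely, tracking which monomials in $\delta p$ occur in $\partial_{\delta p}z_t$ layer by layer, so as to show that \emph{all} $x$-dependence is destroyed at layer $j$ and cannot be regenerated downstream. This is exactly where the second vanishing weight matrix $M_j$ is used, and it is what separates the core $C = C_2$, whose Hessian is positive semidefinite, from the star locus $S = C_1$, where the surviving $x$-dependent cross term $2\sum_k \langle \bar y - y_k,\ \delta M_\ell\,\partial_{\delta p}z_{\ell-1}(x_k;\delta p)\rangle$ generally produces negative-eigenvalue directions. One must also be careful to invoke $\sigma(0) = 0$ correctly so that the relevant activations and their perturbations are genuinely of the claimed orders, and to ensure the $O(\|\delta p\|^3)$ remainders are uniform over the finite data set; beyond that the computation is routine bookkeeping.
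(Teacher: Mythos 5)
Your proof is correct, and it takes a genuinely different route from the paper's. The paper proves the theorem by writing out the generic expressions for the mixed second partials $\frac{\partial^2 L}{\partial\gamma_1\partial\gamma_2}$ via the identity $\partial_{\gamma_1}\partial_{\gamma_2}L = 2\sum_\alpha\bigl(\partial_{\gamma_1}f_p\cdot\partial_{\gamma_2}f_p + (f_p-y_\alpha)\partial_{\gamma_1}\partial_{\gamma_2}f_p\bigr)$, invoking Lemma~\ref{notell} to kill all second partials that avoid $w_{\cdot\cdot}^\ell$, and then handling the remaining mixed derivatives $\partial_*\partial_{w_{\cdot\cdot}^\ell}L$ by an explicit case analysis on whether $*$ is a weight or a bias and whether its layer index sits above or below the highest vanishing weight layer. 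Your argument instead does a direct second-order Taylor expansion of the forward pass, observing that all three blocks of information --- $z_{\ell-1}(x;0)=0$, the vanishing of the linear-in-$\delta p$ change in $z_{\ell-1}$'s dependence on $x$ at layer $j$, and the cancellation $\sum_k(\bar y - y_k)=0$ --- conspire to make $L(p+\delta p)-L(p) = n\|\delta b_\ell\|^2 + O(\|\delta p\|^3)$ in one pass. Your ``$x$-dependence is destroyed at layer $j$ and cannot be regenerated downstream'' is exactly the content of the paper's Case~2, but packaged as a single statement about the forward-pass recursion rather than a case split, which I find cleaner and makes the role of the second vanishing weight matrix more transparent. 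Both routes reach the same Hessian.

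One thing to flag, which is a defect of the theorem statement rather than of your argument: what both your computation and the paper's actually establish is that the Hessian at $p\in C$ is $2n$ times the identity on the $m_\ell$-dimensional output-bias block and zero elsewhere, i.e.\ $m_\ell$ positive eigenvalues, not one (the paper's conclusion ``one positive eigenvalue'' and the entry $d-1$ in Proposition~\ref{dimzeroeigen} are literally correct only when $m_\ell = b = 1$; otherwise the positive eigenvalue $2n$ has multiplicity $b$). You faithfully reproduce this mismatch in your last paragraph by writing ``one positive eigenvalue'' immediately after deriving that the positive part is the whole $b_\ell$-block; you should either record the multiplicity $m_\ell$ or state the scalar-output assumption. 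Relatedly, for the criticality input $\sum_k(\bar y - y_k)=0$ one needs $\bar y = \tfrac{1}{n}\sum_\alpha y_\alpha$ rather than $\sum_\alpha y_\alpha$ as written in the definition of $S$; you sidestep this by defining $\bar y$ to be the value that makes $p$ critical, which is the right reading.
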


The loss function is
\begin{align*}
L(p) = \sum_{\alpha=1}^n (f_p(x_\alpha) - y_\alpha)^2.
\end{align*}

This means that differentiation with respect to any parameter $*$ gives
\begin{align}\label{onederivative}
\frac{\partial}{\partial *} L(p) = \sum_{\alpha=1}^n 2 (f_p(x_\alpha) - y_\alpha) \frac{\partial}{\partial *} f_p(x_\alpha).
\end{align}

In the deep nonlinear case, the function computed by the neural network is of the following form:

\begin{equation} \label{f}
\begin{aligned}
f_p(z) = 
\begin{pmatrix}
w_{11}^\ell &...& w_{1 m_{\ell-1}}^\ell &b_1^\ell \\
&\vdots & & \\
w_{m_\ell 1}^\ell &...& w_{m_\ell m_{\ell-1}}^\ell &b_{m_\ell}^\ell \\
\end{pmatrix}
\circ \sigma \circ
\begin{pmatrix}
w_{11}^{\ell-1} &...& w_{1 m_0}^1 &b_1^{\ell-1}\\
&\vdots & & \\
w_{m_1 1}^{\ell-1} &...& w_{m_1 m_0}^1 &b_{m_1}^{\ell-1} \\
0 & ... & 0 & 1 \\
\end{pmatrix}&
\circ \sigma \circ \\
...
\circ \sigma \circ
\begin{pmatrix}
w_{11}^1 &...& w_{1 m_0}^1 &b_1^1\\
&\vdots & & \\
w_{m_1 1}^1 &...& w_{m_1 m_0}^1 &b_{m_1}^1 \\
0 & ... & 0 & 1 \\
\end{pmatrix}
\begin{pmatrix}
z_1 \\
\vdots \\
z_{m_0}\\
1 \\
\end{pmatrix}&
\end{aligned}
\end{equation}

Thus an entry of $f_p(z)$ will have the form

\begin{align}\label{fentry}
\sum w..^\ell \sigma \left( \sum w..^{\ell-1} ... \sigma \left( \sum w..^2 \sigma \left( \sum w..^1 z. + b.^1 \right) + b.^2 \right) + ... + b.^{\ell-1} \right) + b.^\ell.
\end{align}
For us, it will be sufficient to keep track of the shapes of the terms that appear, without keeping track of the indices.  

\begin{lemma} \label{notell}
Any derivative of any order of $f_p(z)$ with respect to the parameter variables that does not involve $\frac{\partial}{\partial w_{..}^\ell}$ derivatives vanishes.  The only exception is the first derivative $\frac{\partial}{\partial b_{.}^\ell} f_p(z)$.
\end{lemma}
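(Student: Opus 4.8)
The plan is to exploit that $f_p(z)$ is affine-linear in the parameters of the last affine map $A_\ell$. Concretely, I would set $g_p(z) = \sigma(A_{\ell-1}(\cdots\sigma(A_1(z))\cdots)) \in \R^{m_{\ell-1}}$; this is infinitely differentiable (since $\sigma$ is) and depends only on $z$ and on the parameters of the first $\ell-1$ affine layers, and $f_p(z) = M_\ell\, g_p(z) + b_\ell$. So, as a function of the parameter variables, $f_p(z)$ has degree $\le 1$ in the entries $w_{..}^{\ell}$ of $M_\ell$, degree $\le 1$ in the entries $b_{.}^{\ell}$ of $b_\ell$, and the coefficient of $b_j^{\ell}$ is the constant vector $e_j$ (the $j$th standard basis vector of $\R^{m_\ell}$), independent of both $z$ and $p$. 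Since every point of the star locus $S$ satisfies $M_\ell = 0$ by definition, I work at such a $p$.

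First I would fix an arbitrary iterated partial derivative $D = \frac{\partial}{\partial *_1}\cdots\frac{\partial}{\partial *_r}$ in the parameter variables, with $r \ge 1$ and no $*_i$ equal to a weight $w_{..}^{\ell}$ of the last layer; because mixed partials of the smooth function $f_p$ commute, the $*_i$ may be reordered freely. Then I would split into two cases. If some $*_i$ is a bias $b_j^{\ell}$ of the last layer, write $D = D' \circ \frac{\partial}{\partial b_j^{\ell}}$ with $D'$ of order $r-1$; since $\frac{\partial}{\partial b_j^{\ell}} f_p(z) = e_j$ is constant, $D f_p(z) = D'(e_j)$, which is identically $0$ when $r \ge 2$ and equals the nonzero constant $e_j$ when $r = 1$ — this last case is exactly the exception claimed. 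If instead every $*_i$ is a weight or bias of one of the layers $1,\dots,\ell-1$, then $b_\ell$ contributes nothing and, since the entries of $M_\ell$ do not depend on these variables, the Leibniz rule gives $D f_p(z) = M_\ell \cdot D g_p(z)$, which vanishes because $M_\ell = 0$ on $S$. Taking the two cases together proves the lemma.

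I do not expect a genuine obstacle; the only thing to be careful about is the bookkeeping of the hypothesis. ``Not involving $\frac{\partial}{\partial w_{..}^{\ell}}$'' still allows $\frac{\partial}{\partial b_{.}^{\ell}}$ derivatives, so the argument has to peel off the (nonzero, first-order) last-layer bias derivative and treat it separately, and it has to record that the vanishing in the second case is forced by the defining equation $M_\ell = 0$ of $S$ rather than being an identity in the parameters. Everything else is an immediate consequence of the fact that $f_p$ depends on $A_\ell$ only affine-linearly.
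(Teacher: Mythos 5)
Your proof is correct and follows essentially the same route as the paper's: decompose $f_p(z) = M_\ell\, g_p(z) + b_\ell$, kill the $M_\ell$-linear summand on $S$ (where $M_\ell = 0$) via the Leibniz rule, and observe that the $b_\ell$ summand contributes only the constant first-order exception. You are somewhat more careful than the paper, which invokes $M_\ell = 0$ only tacitly (the assertion ``the first term vanishes'' is not an identity in $p$) and omits that hypothesis from the lemma statement entirely, and you handle the mixed case cleanly by first peeling off any $\partial/\partial b_j^\ell$ factor.
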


\begin{proof}
Let $D$ be a derivative satisfying the assumptions.  Using expression \ref{fentry},
\begin{align*}
D f_p(z) &= D \left( \sum w..^\ell \sigma \left( \sum w..^{\ell-1} ... \sigma \left( \sum w..^1 z. + b.^1 \right) + ... + b.^{\ell-1} \right) + b.^\ell \right)  \\
& = D \left(\sum w..^\ell \sigma \left( \sum w..^{\ell-1} ... \sigma \left( \sum w..^1 z. + b.^1 \right) + ... + b.^{\ell-1} \right) \right) + D( b.^\ell)
\end{align*}
Since $D$ does not contain any $w..^\ell$ derivatives, the first term vanishes.  And except for the derivative $\frac{\partial}{\partial b.^\ell}$, any derivative of $b.^\ell$ vanishes, so the second term vanishes as well.
\end{proof}

\begin{proof}[Proof of Theorem \ref{starlocus}]
To show that every point in $S = \bigcup S_k$ is a critical point, we take any $k$ and any $p \in S_k$ and show that $\frac{\partial}{\partial *} L(p) = 0$ for any parameter $*$.  

For most parameters, we will show that $\frac{\partial}{\partial *} f_p(x_\alpha) = 0$ for all $x_\alpha$.

For all $i \neq \ell$, the derivatives 
$$
\frac{\partial}{\partial w..^i} f_p(x_\alpha) 
$$
and
$$
\frac{\partial}{\partial b.^i} f_p(x_\alpha) 
$$
vanish, by Lemma \ref{notell}.  

Next, we check the derivative
$$\frac{\partial}{\partial w..^\ell} f_p(x_\alpha).$$

The coordinates of this derivative are sum of terms of the form
\begin{align*}
\frac{\partial}{\partial w..^\ell} \left( \sum w..^\ell \sigma \left( \sum w..^{\ell-1} ... \sigma \left( \sum w..^2 \sigma \left( \sum w..^1 z. + b.^1 \right) + b.^2 \right) + ... + b.^{\ell-1} \right) + b.^\ell
\right) \\
\end{align*}
which are either 0 or of the form
\begin{align*}
\sigma \left( \sum w..^{\ell-1} ... \sigma \left( \sum w..^2 \sigma \left( \sum w..^1 z. + b.^1 \right) + b.^2 \right) + ... + b.^{\ell-1} \right) \\
\end{align*}

Every expression of this form vanishes on $S$, because every $w..^k = 0$, and for $i > k$, every $b.^i$ is also zero.  So this becomes
\begin{align*}
\sigma \left( \sum w..^{\ell-1} \sigma \left( ... \sigma \left( \sum w..^{k+1} \sigma \left( 0 \right) + 0 \right) ... + 0 \right) \right) \\
\end{align*}
which vanishes because $\sigma(0) = 0$.  

Hence for all the derivatives considered thus far, $\frac{\partial}{\partial *} f_p(x_\alpha) = 0$ for all $x_\alpha$.  Consider the expression (\ref{onederivative}) 
$$\frac{\partial}{\partial *} L(p) = \sum_{\alpha=1}^n 2 (f_p(x_\alpha) - y_\alpha) \frac{\partial}{\partial *} f_p(x_\alpha).$$
In this derivative, the second factor vanishes for every $\alpha$, hence $\frac{\partial}{\partial *} L(p)$ vanishes for all the parameters discussed.

We analyze the final group of derivatives, those with respect to $b.^\ell$, differently.  For $b.^\ell$, there is only one coordinate in which the derivative could be nonzero.  In this component, the second factor of (\ref{onederivative}) is 1.  Thus we are left with 
$$\frac{\partial L}{\partial b.^\ell} = \sum_{\alpha=1}^n 2 (f_p(x_\alpha) - y_\alpha)$$
This vanishes by our choice of $b^\ell$.  This suffices to show that 
$$\frac{\partial}{\partial b.^\ell} L(p) = 0.$$
\end{proof}

\begin{proof}[Proof of Theorem \ref{corelocus}]

Fix any point $p$ in $C$.  By Theorem \ref{starlocus}, $p$ is a critical point.  We will now show that every second derivative of $L$ with respect to the parameters $\{w_i, b_j\}$ vanishes, except for
$$
\frac{\partial^2}{\partial {b.^\ell}^2}= 2n.
$$
This suffices to prove the claim that for every point $p$ in $C$, the Hessian of $L$ at $p$ has one positive eigenvalue and the remaining eigenvalues all vanish.

Differentiation with respect to any two parameters $\gamma_1, \gamma_2$ gives
\begin{align}\label{twoderivatives}
\frac{\partial}{\partial \gamma_1} \frac{\partial}{\partial \gamma_2} L(p) = 2 \sum_{\alpha=1}^n \frac{\partial}{\partial \gamma_1} f_p(x_\alpha) \frac{\partial}{\partial \gamma_2} f_p(x_\alpha) +(f_p(x_\alpha) - y_\alpha) \frac{\partial}{\partial \gamma_1} \frac{\partial}{\partial \gamma_2} f_p(x_\alpha).
\end{align}

By Lemma \ref{notell}, at any point $p \in C$, any derivative of $f_p(x)$ not involving $\frac{\partial}{\partial w..^\ell}$ vanishes except $\frac{\partial}{\partial {b.^\ell}^2}$.  Therefore with the exception of the derivative $\frac{\partial^2}{\partial {b.^\ell}^2}$, both terms of \ref{twoderivatives} vanish for any second derivative of $L$ not involving $\frac{\partial}{\partial w..^\ell}$.  

The derivatives of $L$ it remains to check are $\frac{\partial^2}{\partial {b.^\ell}^2}$ and $\frac{\partial}{\partial *}\frac{\partial}{\partial w..^\ell}$.  Again we use the expression \ref{fentry}
$$
\sum w..^\ell \sigma \left( \sum w..^{\ell-1} ... \sigma \left( \sum w..^2 \sigma \left( \sum w..^1 z. + b.^1 \right) + b.^2 \right) + ... + b.^{\ell-1} \right) + b.^\ell.
$$

First, we compute $$\frac{\partial^2}{\partial {b.^\ell}^2} L(p).$$ 

The derivative $\frac{\partial^2}{\partial {b.^\ell}^2} L(p) = 2n$, in particular is positive and nonzero.

Next, we compute $$\frac{\partial}{\partial *}\frac{\partial}{\partial w..^\ell} L(p).$$

Well,
\begin{align} \label{twoderivativesw}
\frac{\partial}{\partial *} \frac{\partial}{\partial w..^\ell} L(p) = 2 \sum_{\alpha=1}^n \frac{\partial}{\partial *} f_p(x_\alpha) \frac{\partial}{\partial w..^\ell} f_p(x_\alpha) +(f_p(x_\alpha) - y_\alpha) \frac{\partial}{\partial *} \frac{\partial}{\partial w..^\ell} f_p(x_\alpha).
\end{align}

We may check that at any $p \in C$, 
$$\frac{\partial}{\partial w..^\ell} f_p(x) = 0.$$
So the first term of \ref{twoderivativesw} vanishes.  

For the second term, we consider two cases. 

Case 1: $* = w..^i$.

The coordinates of the derivative
$$\frac{\partial}{\partial w..^i} \frac{\partial}{\partial w..^\ell} f_p(z)$$
have the form
\begin{align*}
& \sigma \left( \sum w..^{i - 1} \sigma \left( ... \right) + b.^{i-1} \right) \\
&  \cdot w..^{i+1} \sigma' \left( \sum w..^i \sigma \left( ... \right) + b.^i  \right) \\
& \cdot w..^{i+2} \cdot \sigma' \left( \sum w..^{i+1} \sigma \left( ... \right) + b.^{i+1}  \right) \\ 
& \hspace{.3in}\vdots \\ 
& \cdot w..^\ell \cdot \sigma' \left( \sum w..^{\ell-1} \sigma \left( ... \right) + b.^{\ell-1}  \right). \\
\end{align*}

for every point $p \in C$, this expression is 0, so both terms of \ref{twoderivativesw} vanish.

Case 2: $* = b.^i$.

The coordinates of the derivative
$$\frac{\partial}{\partial b..^i} \frac{\partial}{\partial w..^\ell} f_p(z)$$
have the form
\begin{align*}
&  \cdot w..^{i+1} \sigma' \left( \sum w..^i \sigma \left( ... \right) + b.^i  \right) \\
& \cdot w..^{i+2} \cdot \sigma' \left( \sum w..^{i+1} \sigma \left( ... \right) + b.^{i+1}  \right) \\ 
& \hspace{.3in}\vdots \\ 
& \cdot w..^\ell \cdot \sigma' \left( \sum w..^{\ell-1} \sigma \left( ... \right) + b.^{\ell-1}  \right). \\
\end{align*}

for any point $p \in C$, let $j$ be the largest integer, except for $\ell$, for which $w..^j = 0$.  If $j > i$, this expression is 0, so both terms of \ref{twoderivativesw} vanish.  If $j \leq i$, then this expression has no dependence on $x_\alpha$, so the second term of \ref{twoderivativesw} becomes

$$
2 \sum_{\alpha=1}^n (f_p(x_\alpha) - y_\alpha) \frac{\partial}{\partial *} \frac{\partial}{\partial w..^\ell} f_p(x_\alpha) = v \cdot 2 \sum_{\alpha=1}^n (f_p(x_\alpha) - y_\alpha)
$$
for some vector $v$ independent of $\alpha$.  But
$$
\sum_{\alpha=1}^n (f_p(x_\alpha) - y_\alpha) = 0 
$$
so both terms of \ref{twoderivativesw} vanish.

We conclude that every second derivative of $L(p)$ vanishes except for $\frac{\partial^2}{\partial {b.^\ell}^2} L = 2n$.  
\end{proof}

\subsection{Attracting critical loci}
Gradient flow on a smooth function $L$ is guaranteed to converge to a critical point or diverge to $\infty$.  For feedforward networks with $L2$ loss, it is expected that gradient flow does not diverge to $\infty$, which leaves the question of which critical points are reached by gradient flow under random initialization.

A classical concept is that of the stable set of a critical point $p$.  This is defined as the set of all points $q$ such that gradient flow initialized at $q$ converges to $p$.  For example, given an isolated nondegenerate saddle point $p$, the stable set of $p$ is zero measure.  This fact is central to the proof in \cite{jordansaddle} that gradient based methods can efficiently escape nondegenerate saddles.  

The critical loci that are difficult for gradient based methods to escape are those with positive measure stable sets.  Hence we make the following definition.

\begin{definition}
A locus $C$ of critical points is called an attracting critical locus if there exists a positive measure set $S_C$ such that gradient flow initialized at any point in $S_C$ converges to a point in $C$.
\end{definition}

Note that for $C$ to be an attracting critical locus, it is not necessary for the stable manifold of every point $p$ in $C$ to have full measure.  For example, the $x$-axis is an attracting critical locus for the function $f(x,y) = x^2$.

It is well understood which loci of nondegenerate critical points can be attracting critical loci.  Namely, loci of nondegenerate local and global minima can be attracting critical loci.  Loci of nondegenerate local and global maxima cannot.  Isolated nondegenerate saddles cannot be attracting critical points.  However, it is not well understood in general which degenerate critical points can form attracting critical loci.  As in the nondegenerate case, loci of degenerate local and global minima can still be attracting critical loci.  However, in the degenerate case, saddles can also be attracting critical points.  

Not only can an isolated degenerate saddle point be an attracting critical point, as we will see in the example that follows, but if there is a positive dimensional locus of degenerate saddle points, that locus can form an attracting set even if individually none of the points are attracting critical points.  Given the presence of positive dimensional families of degenerate critical points of the loss function $L$, it is of interest to understand which degenerate critical points of $L$ can form attracting critical loci.  We do not know at present of a proof that the core locus $C_2$ is not an attracting critical locus.  

\begin{example*}
As an example of a degenerate critical point which is an attracting critical point, consider the function 
$$f(x,y) = x^3 + y^3.$$  

\begin{figure}\label{x3+y3}
\begin{center}
\includegraphics[width=3.5in]{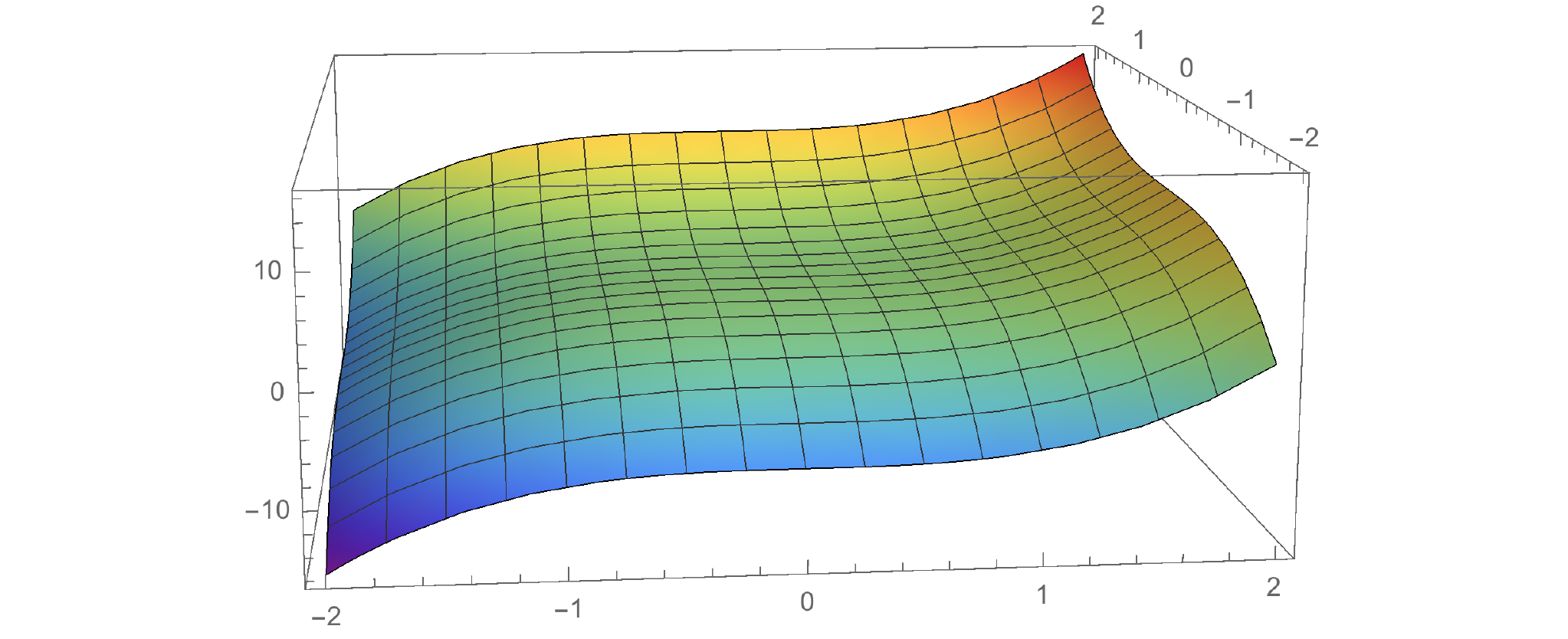}
\end{center}
\caption{The function $f(x,y) = x^3 + y^3$ has a degenerate critical point at the origin, and this critical point is an example of an attracting critical point, as a set of positive measure flows to it under gradient flow.}
\end{figure}

The graph of $f$ is shown in Figure \ref{x3+y3}.  The origin is an isolated critical point, and this critical point is degenerate --- the Hessian of $f$ vanishes at the origin.  Every point in the quadrant $(x,y)$ with $x, y$ both nonnegative, flows to the origin under gradient flow.  Hence the origin is an attracting critical point.
\end{example*}

\section{Mildly overparameterized regime}\label{overparam}
In this section, we recall some basic facts about the loss function $L$ that hold in all three overparameterized regimes.  We know less here than in the very wide and extremely wide settings, but this regime is the most interesting because it is exactly the regime in which real-world neural networks usually lie.  Although this is the theoretically most challenging of the overparameterized regimes, in previous work we were able to establish the following basic understanding about the locus of global minima of $L$ in this setting.  

\subsection{Global minima}

In \cite{cooper}, we showed that for any overparameterized neural network with $d$ parameters, smooth activation function $\sigma$, L2 loss $L(p)$, and training on $n$ distinct data points $\{(x_i, y_i)\}$, where $x_i \in \R^a$ and $y_i \in \R^b$,

\begin{theorem*}
If $d>n$ then the set $M = L^{-1}(0)$ is generically (that is, possibly after an arbitrarily small change to the data set) a smooth $d-bn$ dimensional submanifold (possibly empty) of $\R^d$.
\end{theorem*}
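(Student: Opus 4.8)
The plan is to realize $M$ as the preimage of a single point under a smooth map and then apply Sard's theorem together with the regular value theorem. Define the evaluation map
$$\Phi \colon \R^d \to (\R^b)^n \cong \R^{bn}, \qquad \Phi(p) = \bigl(f_p(x_1), \ldots, f_p(x_n)\bigr),$$
and set $Y = (y_1, \ldots, y_n) \in \R^{bn}$. Since $L(p) = \sum_{\alpha} \|f_p(x_\alpha) - y_\alpha\|^2 = \|\Phi(p) - Y\|^2$, we have $L(p) = 0$ if and only if $\Phi(p) = Y$, so $M = L^{-1}(0) = \Phi^{-1}(Y)$. Because $\sigma$ is $C^\infty$ and each $f_p(x)$ is obtained from $\sigma$ and the affine maps $A_i$ by composition, $\Phi$ is a $C^\infty$ map; in particular it is $C^k$ for every $k$, which is more than enough for Sard's theorem.

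First I would apply Sard's theorem to $\Phi$: the set of critical values of $\Phi$ has Lebesgue measure zero in $\R^{bn}$. Hence the set of regular values is dense (indeed of full measure), so there exist points $Y'$ arbitrarily close to $Y$ that are regular values of $\Phi$. Replacing the target values $(y_1, \ldots, y_n)$ by the nearby values encoded in $Y'$ is precisely an arbitrarily small change to the data set, and it leaves the inputs $x_i$ — and therefore the map $\Phi$ itself — untouched, so in particular the $x_i$ remain distinct. Thus, after such a perturbation, we may assume $Y$ is a regular value of $\Phi$.

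Next I would invoke the regular value theorem (the preimage theorem): if $Y$ is a regular value of the smooth map $\Phi \colon \R^d \to \R^{bn}$, then $\Phi^{-1}(Y)$ is a smooth embedded submanifold of $\R^d$ of dimension $d - bn$, or is empty. Either $Y \notin \operatorname{im}(\Phi)$, in which case $M = \emptyset$ — this necessarily happens when $d < bn$, since then $d\Phi_p$ cannot be surjective — or $Y \in \operatorname{im}(\Phi)$ and $M$ is a smooth $(d-bn)$-dimensional manifold, possibly disconnected and closed as a subset of $\R^d$. This is exactly the assertion of the theorem.

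This argument has no real obstacle; the only point requiring care is the precise meaning of ``generic.'' The cleanest choice, as above, is to perturb only the outputs $y_i$, so that $\Phi$ is held fixed and ordinary Sard's theorem applies verbatim. If one instead wanted genericity as a statement about the full data set $\{(x_i,y_i)\}$, allowing perturbation of the inputs $x_i$ as well, one would attempt a parametric transversality argument: form the combined map $\widehat\Phi \colon \R^d \times (\R^a)^n \to \R^{bn}$, check that $Y$ is a regular value of $\widehat\Phi$ (this is where one uses that varying the inputs moves $\Phi$ in enough directions), and then apply the parametric transversality theorem to conclude that for almost every choice of inputs the restricted map has $Y$ as a regular value. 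That refinement is the only part that would take genuine work; the version stated here, with perturbation of the $y_i$, follows immediately from Sard plus the preimage theorem.
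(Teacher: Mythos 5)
The proposal is correct and takes essentially the same approach as the cited source \cite{cooper}: realize $M$ as the fiber $\Phi^{-1}(Y)$ of the evaluation map $\Phi(p)=(f_p(x_1),\dots,f_p(x_n))$, invoke Sard's theorem to find a regular value $Y'$ arbitrarily close to $Y$, absorb that perturbation into a small change of the outputs $y_\alpha$, and conclude via the preimage (regular value) theorem. Your observation that perturbing only the $y_\alpha$ leaves the map $\Phi$ itself fixed — so that ordinary Sard's theorem applies verbatim rather than requiring a parametric transversality argument — is exactly the device used there.
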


We further considered the case that the width of the last hidden layer $m_{\ell-1}$ is greater than $n$, which is essentially the very wide regime.  In this case, we showed by construction that $M$ is nonempty.  However, we expect that $M$ is nonempty long before the width of the last hidden layer reaches $n$.  

\section{Very wide regime}\label{verywide}
Now we record some facts about the geometry of $L$ that hold in the very wide and extremely wide regimes, that is, that hold as soon as $m \geq n$.  

Several groups have made substantial progress in uncovering the geometry of the loss function $L$ in the case of very wide feedforward neural networks.  Several groups, including \cite{fullrank}, \cite{quynhconnectedsublevel}, \cite{ruoyumeasure0}, and \cite{venturibruna} show that in this setting, the loss function $L$ has no spurious valleys.  Neither paper rules out the possibility that $L$ contains local minima, but both show that if $p$ is a local minimum, then the connected component of the level set of $L$ containing $p$ is positive dimensional.  In this section, we build on their results, toward a more quantitative understanding of the local geometry of $L$ at different kinds of critical points, for feedforward neural networks in the very wide regime.  

For completeness, we record here some standard definitions.

\begin{definition} \label{localmin}
A continuous function $L$ is said to have a local minimum at the point $p$ if there exists some $\epsilon > 0$ such that $L(q) \geq L(p)$ for all $|q-p| < \epsilon$.
\end{definition}

\begin{definition}[\cite{venturibruna}] \label{spuriousvalley} 
A continuous function $L$ is said to have a spurious valley if there is some sublevel set $L^{-1}\bigl((-\infty,a]\bigr)$ and a component $E$ of that sublevel set on which $\inf (L|_E) \neq \inf L.$
\end{definition}

\begin{definition}[\cite{fullrank}] \label{badvalley} 
A continuous function $L$ is said to have a bad local valley if there is some strict sublevel set $L^{-1}\bigl((-\infty,a)\bigr)$ and a component $E$ of that strict sublevel set on which $\inf (L|_E) \neq \inf L.$
\end{definition}

\begin{definition}\label{strictlocalmin}
A local minimum $p$ is called a strict local minimum if there is an $\epsilon$-neighborhood of $p$ such that for every $q \in B_{\epsilon}(p) \setminus{p} $, $L(q)>L(p)$.  
\end{definition}

\begin{definition} \label{isolatedcritpoint}
A critical point $p$ is called isolated if there exists some $\epsilon>0$ such that there is no other critical point within an $\epsilon$-ball of $p$.  Similarly, a local minimum/local maximum/saddle point $p$ is called isolated if there exists some $\epsilon>0$ such that there is no other local minimum/local maximum/saddle point within an $\epsilon$-ball of $p$
\end{definition}

\begin{definition} \label{nondegencritpoint}
A critical point $p$ is a nondegenerate critical point if every eigenvalue of the Hessian of $L$ at $p$ is nonzero.  
\end{definition}

We also include the Morse lemma, which is a description of fundamental importance of the local behavior of a function near a nondegenerate critical point.

\medskip

\noindent
{\bf Morse Lemma (\cite[Thm.~2.39]{lee}).}
{\it Let $f\colon M \to \mathbb R$ be a smooth function and let $x_0$ be a nondegenerate critical point for $f$ of index $\nu$.
Then there is a local coordinate system $(U,\mathbf x)$ containing $x_0$ such that the local representative $f_U := f \circ \mathbf x^{-1}$ has the form
\begin{align*}
f_U(x^1,\ldots,x^n)
=
f(x_0)
+
\sum_{i,j} h_{ij}x^ix^j
\end{align*}
and it may be arranged that the matrix $h = (h_{ij})$ is a diagonal matrix of the form $\text{diag}(-1,\ldots,-1,1,\ldots,1)$ for some number (perhaps zero) of ones and minus ones.
The number of minus ones is exactly the index $\nu$.}

\medskip

\subsection{$R$-map}

We now fix some notation for the maps that will be relevant for us in this section.
For each layer, we define the map 
\begin{equation*}
\phi_i: \R^{d_1 + ... + d_i} \times \R^a \rightarrow \R^{m_i}
\end{equation*}
by
\begin{equation*}
\phi_i(p_1, ..., p_i, x) = \sigma \circ A_{p_i} \circ ... \circ \sigma \circ A_{p_1}(x).
\end{equation*}
In other words, $\phi_i$ computes the function determined by the parameters $p_1,...,p_i$ of the first $i$ layers of the neural network.

It will be important for us to understand not just the output of the $i^{th}$ layer given a single input vector $x$, but to understand the output of the $i^{th}$ layer on all input data vectors $x_1, ..., x_n$ simultaneously.  So we now define the function $\Phi_{i}: \R^{d_1 + ... + d_i} \times \R^{a n} \rightarrow \R^{m_i n}$ by
$$\Phi_{i}(p_1, ..., p_i , x_1, ..., x_n) = 
\left(
\begin{bmatrix}
| \\ \phi_{i}(p_1, ..., p_i, x_1) \\ |
\end{bmatrix}
\dots
\begin{bmatrix}
| \\ \phi_{i}(p_1, ..., p_i, x_n) \\ |
\end{bmatrix}
\right).
$$

Given a choice of parameters $p=(p_1, ..., p_\ell)$, our neural network computes the function 
$$f_{p}(x) = A_{p_\ell} \circ \sigma \circ A_{p_{\ell-1}} \circ ... \circ \sigma \circ A_{p_1}(x)$$ 
where $A_{p_i}$ is the affine linear transformation from $\R^{m_{i-1}}$ to $\R^{m_i}$ determined by the parameter vector $p_i$.  
 
In particular, $f_p$ is a composition of a string of functions, and it is often useful to decompose this into the composition of two functions.  

$$f_p(x) = A_{p_\ell} \circ \left(\sigma \circ A_{p_{\ell-1}} \circ ... \circ \sigma \circ A_{p_1}(x) \right)= R \circ Q(x)$$ 
where we name the last function $R = A_{p_\ell}$ and $Q$ is the composition of all of the preceding functions, that is
$Q =  \sigma \circ A_{p_{\ell-1}} \circ ... \circ \sigma \circ A_{p_1}.$  

The space $\R^d$ of all parameters for the neural network is a product of the spaces $\R^{d_i}$ of the parameters for each layer
$$\R^d = \R^{d_1} \times ... \times \R^{d_\ell}$$
where $d_i = m_i (m_{i-1}+1)$.

When decomposing the neural network as the composition of two functions, it is useful to also decompose in a compatible way this parameter space into a product of two spaces.  Let $\R^I = \R^{d_1} \times ... \times \R^{d_{\ell-1}}$ denote the parameters corresponding to all but the final layer, and $\R^F = \R^{d_\ell}$ denote the parameters corresponding to the final layer.  Then 
$$\R^d = \R^I \times \R^F.$$  

The loss function $L$, which is a function from $\R^d$ to $\R$, can also be considered as a function from $\R^I \times \R^F$ to $\R$.  In particular, we can consider the restriction of $L$ to special subsets of $\R^I \times \R^F$.  

The geometry of $L$ as a function on $\R^d$ can be very complicated.  However, consider the projection 
$$\pi: \R^I \times \R^F \rightarrow \R^I.$$

It turns out that the restriction of $L$ to any fiber of $\pi$ is very simple.  The following two statements Proposition \ref{quadratic} and Lemma \ref{fullrankglobalmin} appear in some guise in multiple works, including \cite{quynhconnectedsublevel} and \cite{ruoyumin}, but we include them here under our assumptions for completeness.

\begin{proposition} \label{quadratic}
For any $p_I \in \R^I$, $L$ restricted to the fiber $S_{p_I} = \pi^{-1}(p_I)$ is a quadratic function.  
\end{proposition}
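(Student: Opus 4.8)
The plan is to exploit the decomposition $f_p = R \circ Q$ already set up above, where $R = A_{p_\ell}$ depends only on the final-layer parameters $p_F = (M_\ell, b_\ell) \in \R^F$, and $Q = \sigma \circ A_{p_{\ell-1}} \circ \cdots \circ \sigma \circ A_{p_1}$ depends only on $p_I \in \R^I$. The fiber $S_{p_I} = \pi^{-1}(p_I)$ is precisely the set of parameter vectors whose first $\ell-1$ layers are the given $p_I$ and whose last layer $p_F$ is free, so $S_{p_I} \cong \R^{d_\ell}$ with $p_F$ as coordinates. Since $Q$ is determined by $p_I$ alone, fixing $p_I$ fixes $Q$; in particular the $n$ vectors $v_\alpha := Q(x_\alpha) \in \R^{m_{\ell-1}}$ are constants on $S_{p_I}$ (they depend on $p_I$ and the data, but not on the point of the fiber).

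Next I would write, for a point of $S_{p_I}$ with final-layer parameters $p_F = (M_\ell, b_\ell)$,
\[
f_p(x_\alpha) = R\bigl(Q(x_\alpha)\bigr) = M_\ell v_\alpha + b_\ell \in \R^b,
\]
and observe that each of the $b$ coordinates of $f_p(x_\alpha)$ is an affine-linear function of the entries of $M_\ell$ and $b_\ell$, i.e.\ of the coordinates on $S_{p_I}$. Hence each coordinate of $f_p(x_\alpha) - y_\alpha$ is affine-linear in $p_F$. Substituting into the loss gives
\[
L\big|_{S_{p_I}}(p_F) = \sum_{\alpha=1}^n \sum_{j=1}^b \bigl( (M_\ell v_\alpha + b_\ell - y_\alpha)_j \bigr)^2,
\]
which exhibits $L\big|_{S_{p_I}}$ as a finite sum of squares of affine-linear functions of $p_F$, hence a polynomial of degree at most $2$ in the fiber coordinates. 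This is the claimed quadratic function. It is worth recording explicitly that "quadratic" here is meant to allow lower-order terms, and that the Hessian of $L\big|_{S_{p_I}}$ is the constant positive-semidefinite matrix obtained from the Gram data of the augmented vectors $\tilde v_\alpha := (v_\alpha, 1) \in \R^{m_{\ell-1}+1}$, a fact that will be convenient in the subsequent lemma.

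I do not expect a genuine obstacle in this argument; the only point requiring care is the bookkeeping that $Q$, and therefore each $v_\alpha$, truly carries no dependence on $p_F$, which is immediate from the definitions of the splitting $\R^d = \R^I \times \R^F$ and of the functions $Q$ and $R$. The substance of the proposition is simply the observation that the output layer enters $f_p$ linearly in its own parameters, so that once all earlier layers are frozen the loss collapses to an ordinary least-squares objective in $p_F$.
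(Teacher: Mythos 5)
Your proof is correct and follows essentially the same route as the paper: both arguments freeze $p_I$ so that $\phi_{\ell-1}(p_I,x_\alpha)$ (your $v_\alpha = Q(x_\alpha)$) becomes constant, observe that $f_p(x_\alpha) = M_\ell v_\alpha + b_\ell$ is affine in the last-layer parameters, and conclude that the $L2$ loss, being a sum of squares of these affine functions, is quadratic on the fiber. Your added remarks about the Hessian being the Gram matrix of the augmented vectors $(v_\alpha,1)$ are a useful gloss but not part of the paper's proof.
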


\begin{proof}
\begin{align*}
L(p) &= L(p_I, p_F) \\
&= \sum_{i=1}^n \left( f_{(p_I, p_F)} (x_i) - y_i \right)^2 \\
&= \sum_{i=1}^n \left( M_{p_\ell} \phi_{\ell-1}(p_I, x_i) + b_{p_\ell} - y_i \right)^2 \\
\end{align*}
In the fiber $S_{p_I}$, both $p_I$ and all the $x_i$ are fixed, so the vectors $\phi_{\ell-1}(p_I, x_i)$ are constant.  Thus as a function of $w..^{p_\ell}$ and $b.^{p_\ell}$, 
$$
L|_{\pi^{-1}(p_I)}
$$
is a quadratic function.
\end{proof}

\begin{figure}\label{slices}
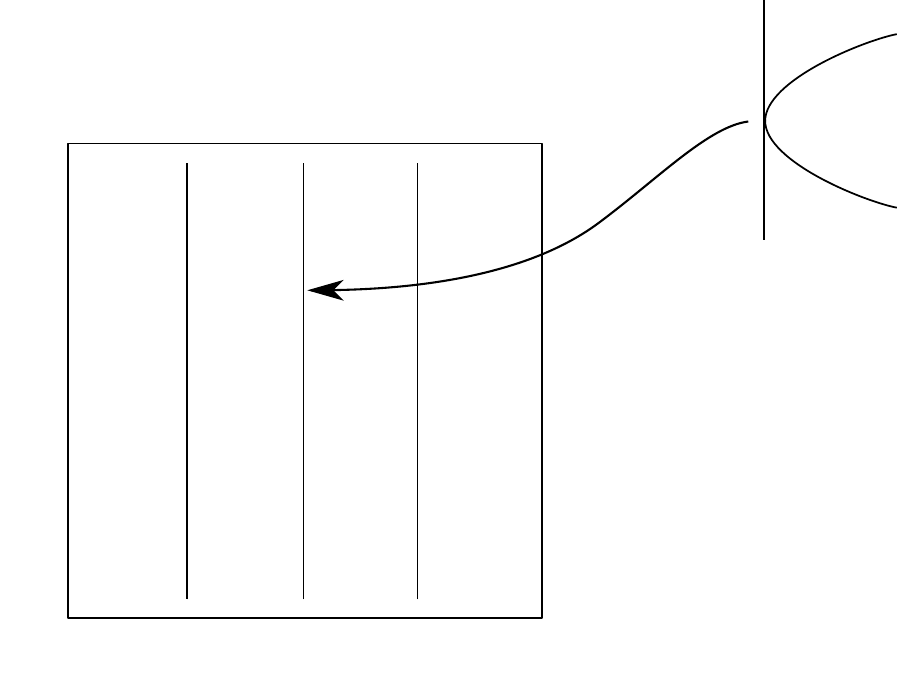
\end{figure}

This gives us some handle on the geometry of $L$, as most functions $G: \R^d \rightarrow \R$ do not satisfy the property that the domain can be sliced in such a way that $G$ restricted to each slice is a quadratic function.  We now analyze the geometry of the critical points of $L$ by studying $L$ in each slice.  

\begin{lemma} \label{fullrankglobalmin}
If $p_I$ is such that $\Phi_{\ell-1}(p_I, x_1, ..., x_n)$ has full rank, then any critical point in $\pi^{-1}(p_I)$ is a global minimum of $L$.
\end{lemma}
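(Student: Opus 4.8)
The plan is to combine the quadratic structure from Proposition \ref{quadratic} with the observation that this quadratic is in fact \emph{convex}, so that a critical point of the restriction is automatically a minimizer of the restriction, and then to use the full-rank hypothesis to identify that minimum value with the global minimum $0$ of $L$.

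First I would write $L|_{\pi^{-1}(p_I)}$ out explicitly. Fix $p_I$, set $v_\alpha := \phi_{\ell-1}(p_I,x_\alpha)\in\R^{m_{\ell-1}}$ and $\tilde v_\alpha := (v_\alpha,1)\in\R^{m_{\ell-1}+1}$, and collect the final-layer parameters into the $m_\ell\times(m_{\ell-1}+1)$ matrix $W=[\,M_{p_\ell}\mid b_{p_\ell}\,]$. Then $f_p(x_\alpha)=W\tilde v_\alpha$, so $L|_{\pi^{-1}(p_I)}(W)=\sum_{\alpha=1}^n\|W\tilde v_\alpha-y_\alpha\|^2$, a sum of squares of affine functions of the entries of $W$, hence a convex quadratic. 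A critical point of a convex function is a global minimizer of that function, and a critical point $p=(p_I,p_F)$ of $L$ on $\R^d$ has all its final-layer partials vanishing, so $p_F$ is a critical point of $L|_{\pi^{-1}(p_I)}$; therefore $p$ minimizes $L$ over the fiber $\pi^{-1}(p_I)$.

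Next I would identify that minimum value using the hypothesis. The columns of $\Phi_{\ell-1}(p_I,x_1,\dots,x_n)$ are exactly the $v_\alpha$; since this matrix is $m_{\ell-1}\times n$ with $m_{\ell-1}\geq m>n$, full rank means rank $n$, i.e.\ the $v_\alpha$ are linearly independent, and appending a constant coordinate to each preserves linear independence, so $\tilde v_1,\dots,\tilde v_n$ are linearly independent in $\R^{m_{\ell-1}+1}$. Hence the map $w\mapsto(\langle w,\tilde v_\alpha\rangle)_{\alpha=1}^n$ from $\R^{m_{\ell-1}+1}$ to $\R^n$ is surjective, so for each output coordinate $j$ there is a row $w_j$ with $\langle w_j,\tilde v_\alpha\rangle=(y_\alpha)_j$ for all $\alpha$; assembling the $w_j$ gives $W^*$ with $W^*\tilde v_\alpha=y_\alpha$ for every $\alpha$, i.e.\ $L|_{\pi^{-1}(p_I)}(W^*)=0$. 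Since $L\geq 0$ on all of $\R^d$, this shows $\min_{\R^d}L=0$ and that it is attained inside the fiber. Combining with the previous paragraph, any critical point of $L$ in $\pi^{-1}(p_I)$ attains the value $0=\min_{\R^d}L$, so it is a global minimum of $L$.

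The only step with real content is the linear-algebra claim that full column rank of $\Phi_{\ell-1}$, after adjoining the bias coordinate, forces solvability of the interpolation system $W\tilde v_\alpha=y_\alpha$; the points to be careful about are the meaning of ``full rank'' for the non-square matrix $\Phi_{\ell-1}$ and the elementary fact that appending a row of ones does not decrease its column rank. Everything else is convexity of a sum of squares together with non-negativity of $L$.
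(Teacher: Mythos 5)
Your proposal is correct and follows essentially the same approach as the paper: restrict $L$ to the fiber, observe the restriction is a nonnegative (you say convex, same consequence) quadratic so that critical points of the restriction are minimizers of it, and then use the full-rank hypothesis to exhibit a point of the fiber with zero loss, identifying that minimum with the global minimum of $L$. The only difference is that you spell out the linear-algebra step (column independence of $\Phi_{\ell-1}$, invariance under appending the bias row, surjectivity of the interpolation map) that the paper states without elaboration.
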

\begin{proof}
By Proposition \ref{quadratic}, $L|_{\pi^{-1}(p_I)}$ is a quadratic function.  Furthermore, $L$ is nonnegative, so $L|_{\pi^{-1}(p_I)}$ is also nonnegative.  The only critical points of a nonnegative quadratic function are global minima.  If $p \in \pi^{-1}(p_I)$ is a critical point of $L$, it must be a critical point also of $L|_{\pi^{-1}(p_I)}$, and hence a global minimum of $L|_{\pi^{-1}(p_I)}$.  

If $\Phi_{\ell-1}(p_I, x_1, ..., x_n)$ has full rank, then there is some matrix $M$ which satisfies the equation
$$
M \phi_{\ell-1}(p_I, x_i) = y_i
$$
for each $i = 1,...,n$.  

Hence there is some choice of parameters $p_F$ such that 
$$
\sum_{i=1}^n \left(M_{p_F} \phi_{\ell-1}(p_I, x_i) - y_i \right)^2 = 0
$$
i.e.
$$
L(p_I, p_F) = 0.
$$

Thus the global minimum of $L|_{\pi^{-1}(p_I)}$ is 0.  Since we deduced that $p$ must be a global minimum of $L|_{\pi^{-1}(p_I)}$, $L(p)$ must be 0, and hence $p$ is in fact a global minimum of $L$ as well. 
\end{proof}

As a corollary of Proposition \ref{quadratic} and Lemma \ref{fullrankglobalmin}, we can prove the following.  Similar results have been proved under somewhat different assumptions in \cite{fullrank} and \cite{ruoyumeasure0}.

\begin{corollary} \label{omega}
Consider a fully connected feedforward network with $L2$ loss $L$, smooth activation function $\sigma$, and last hidden layer of width $m_{\ell-1} \geq n$ where $n$ is the number of training samples.  Let $\Omega$ denote the set of parameter vectors $p_I \in \R^I$ with the property that $\Phi_{\ell-1}(p_I, x_1, ..., x_n)$ has full rank.  Then for every $p \in \Omega$ that is not a global minimum of $L$, there exists a line $C_p$ from $p$ to some global minimum $g_p$ such that $L$ is strictly decreasing along $C_p$.  Furthermore, $L$ decreases quadratically along $C_p$.  
\end{corollary}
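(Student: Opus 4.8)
The plan is to work entirely inside the fiber $S_{p_I} = \pi^{-1}(p_I)$ and to exploit the quadratic structure provided by Proposition \ref{quadratic}. Write $p = (p_I, p_F)$ with $p_I \in \Omega$. From the proof of Lemma \ref{fullrankglobalmin}, the hypothesis $p_I \in \Omega$ (together with $m_{\ell-1} \geq n$) guarantees that $L|_{S_{p_I}}$ is a nonnegative quadratic function whose minimum value is $0$, and the same proof exhibits a final-layer parameter vector $p_F^\ast$ with $L(p_I, p_F^\ast) = 0$ (the full rank of $\Phi_{\ell-1}(p_I, x_1, \ldots, x_n)$ produces a weight matrix interpolating the $y_i$). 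Set $g_p = (p_I, p_F^\ast)$. Since $p$ is assumed not to be a global minimum of $L$, we have $L(p) > 0 = L(g_p)$, and in particular $p_F \neq p_F^\ast$. Let $C_p$ be the line segment $t \mapsto \gamma(t) := (p_I,\, (1-t)p_F + t p_F^\ast)$, $t \in [0,1]$, which joins $p$ to $g_p$ and lies entirely in $S_{p_I}$.

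The next step is to restrict $L$ to this segment and identify the resulting one-variable function. Because $\gamma(t) \in S_{p_I}$ for all $t$ and $L|_{S_{p_I}}$ is a polynomial of degree at most $2$ in the final-layer parameters, the composition $q(t) := L(\gamma(t))$ is a polynomial of degree at most $2$ in $t$. Moreover $q(t) \geq 0$ for every $t \in \R$, since the affine parametrization $\gamma$ is defined on all of $\R$ and $L \geq 0$ on all of $\R^d$; and $q(0) = L(p) > 0$ while $q(1) = 0$. The key (elementary) observation is then that a nonnegative polynomial of degree $\leq 2$ on $\R$ that vanishes at $t = 1$ but is positive at $t = 0$ must in fact have degree exactly $2$, with positive leading coefficient and a double root at $t = 1$; that is, $q(t) = L(p)\,(t-1)^2$. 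Indeed, if $q$ had degree $\leq 1$ it would be a nonnegative affine function on all of $\R$, hence constant, contradicting $q(0) \neq q(1)$; and a genuine quadratic that is nonnegative and vanishes at $t=1$ attains its unique minimum there. From $q(t) = L(p)(t-1)^2$ with $L(p) > 0$ we read off that $q$ is strictly decreasing on $[0,1]$, so $L$ strictly decreases along $C_p$ as we travel from $p$ to $g_p$, and the explicit formula exhibits the claimed quadratic rate of decrease.

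I do not anticipate a genuine obstacle: the statement is essentially a repackaging of Proposition \ref{quadratic} and Lemma \ref{fullrankglobalmin}. The only point needing a little care is the degenerate-case analysis of the one-variable polynomial $q$ — ruling out that it is affine (or identically zero) — and this is dispatched precisely by using that $q$ is nonnegative on \emph{all} of $\R$, not merely on $[0,1]$, together with $L(p) > 0$. The only other bookkeeping item is to be sure the fiberwise minimizer $p_F^\ast$ actually exists, which is exactly the interpolation construction carried out in the proof of Lemma \ref{fullrankglobalmin} and which I would simply cite rather than reprove.
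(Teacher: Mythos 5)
Your argument is correct and follows the same route as the paper: restrict $L$ to the fiber $\pi^{-1}(p_I)$, invoke Proposition \ref{quadratic} to get a nonnegative quadratic, use the full-rank hypothesis via Lemma \ref{fullrankglobalmin} to produce a zero of $L$ in that fiber, and observe that the one-variable restriction along the segment to that zero is a nonnegative quadratic vanishing at the endpoint, hence of the form $L(p)(t-1)^2$. The only difference is that you spell out the elementary one-variable analysis (ruling out the affine case, identifying the double root) that the paper leaves implicit, which makes the ``strictly decreasing, quadratically'' conclusion fully explicit.
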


\begin{proof}
By Proposition \ref{quadratic}, $L$ restricted to $p_I \times \R^F$ is a nonnegative quadratic function.  Hence the locus of minima in this fiber is a linear subspace and $L$ restricted to any line $C_p$ from $p$ to a point $g_p$ in this linear subspace is also a quadratic function, and $L$ is strictly decreasing as one moves from $p$ to $g_p$.

Finally, by Lemma \ref{fullrankglobalmin}, any minimum of $L|_{\pi^{-1}(p_I)}$ is a global minimum of $L$, hence $g_p$ is a global minimum of $L$.
\end{proof}

\subsection{Measure zero}

Given Corollary \ref{omega}, it is of interest to know whether $\Omega \times \R^F \subset \R^d$ contains nearly all parameter vectors.  In \cite{fullrank}, Nguyen et. al. prove that this is indeed the case, when the activation function $\sigma$ is an analytic, strictly increasing, bounded function with  $\lim\limits_{t \rightarrow \infty} \sigma(t) = 0$ and all input data points $x_i$ distinct.  They work with a large class of architectures, allowing convolutional networks as well as skip connections, but in this paper we will only consider the feedforward setting.  

\begin{theorem}[\cite{fullrank}]\label{openanddense} 
Given $n$ distinct vectors $x_1, ..., x_n \in \R^a$, and activation function $\sigma$ satisfying the above assumptions, the complement of the region $\Omega \subset \R^{d_1 + ... + d_{\ell-1}}$ of parameter vectors $p$ such that $\Phi_{\ell-1}(p, x_1, ..., x_n)$ has full rank has measure zero.
\end{theorem}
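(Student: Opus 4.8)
The plan is to reduce the statement to the existence of a single good parameter vector and then invoke real-analyticity. Set $N = d_1 + \cdots + d_{\ell-1}$, and let $g \colon \R^N \to \R$ be the sum of the squares of all $n \times n$ minors of the $m_{\ell-1} \times n$ matrix $\Phi_{\ell-1}(p, x_1, \ldots, x_n)$; there are finitely many such minors since $m_{\ell-1} \geq n$. Because $\sigma$ is analytic and the affine maps $A_{p_i}$ depend polynomially on $p$, each entry of $\Phi_{\ell-1}$, and hence $g$, is real-analytic on all of $\R^N$. Now $\Phi_{\ell-1}(p, x_1, \ldots, x_n)$ has full rank exactly when $g(p) \neq 0$, so the complement of $\Omega$ is precisely the zero set $\{g = 0\}$. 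Since $\R^N$ is connected and a real-analytic function on a connected open set that is not identically zero has a zero set of Lebesgue measure zero, it suffices to exhibit one $p^\ast \in \R^N$ at which $\Phi_{\ell-1}(p^\ast, x_1, \ldots, x_n)$ has full rank --- equivalently, at which $\phi_{\ell-1}(p^\ast, x_1), \ldots, \phi_{\ell-1}(p^\ast, x_n)$ are linearly independent in $\R^{m_{\ell-1}}$.

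To construct $p^\ast$, I would first push the data through the earlier hidden layers while merely keeping it separated. For each of the layers $1, \ldots, \ell-2$, the set of weight matrices that fail to be injective on the (finite) collection of vectors currently representing the data is a finite union of proper subspaces, so a generic choice avoids it; combined with the coordinatewise injectivity of $\sigma$ (here $\sigma$ strictly increasing is used), this yields parameters $p^\ast_1, \ldots, p^\ast_{\ell-2}$ for which the vectors $v_i := \phi_{\ell-2}(p^\ast_1, \ldots, p^\ast_{\ell-2}, x_i)$, $i = 1, \ldots, n$, are pairwise distinct (when $\ell = 2$ there is nothing to do and $v_i = x_i$). Next pick a generic linear functional $w$ on $\R^{m_{\ell-2}}$ so that the scalars $s_i := w \cdot v_i$ are pairwise distinct, and relabel so that $s_1 < \cdots < s_n$. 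Choose thresholds $\tau_1 < \cdots < \tau_n$ with $s_j < \tau_j < s_{j+1}$ for $j < n$ and $\tau_n > s_n$, and give the $j$-th neuron of the last hidden layer weight row $c_j w$ and bias $-c_j \tau_j$ for $j = 1, \ldots, n$, the remaining $m_{\ell-1} - n$ neurons being arbitrary. With this choice the submatrix of $\Phi_{\ell-1}(p^\ast, x_1, \ldots, x_n)$ on rows $1, \ldots, n$ is $M(c) = \bigl(\sigma(c_j(s_i - \tau_j))\bigr)_{i,j}$.

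It remains to see that $M(c)$ is invertible for a suitable choice of $c_1, \ldots, c_n$, and this is exactly where boundedness and strict monotonicity of $\sigma$ come in. Since $s_i - \tau_j > 0$ precisely when $i > j$, letting all $c_j \to +\infty$ drives $\sigma(c_j(s_i - \tau_j))$ to $\sigma(+\infty)$ when $i > j$ and to $\sigma(-\infty)$ when $i \leq j$; thus $M(c)$ converges to the matrix $\tilde M$ with $\tilde M_{ij} = \sigma(-\infty)$ for $i \leq j$ and $\tilde M_{ij} = \sigma(+\infty) = 0$ for $i > j$. This $\tilde M$ is upper triangular with every diagonal entry equal to $\sigma(-\infty)$, which is strictly negative because $\sigma$ is strictly increasing with $\sigma(+\infty) = 0$, so $\det \tilde M = \sigma(-\infty)^n \neq 0$. (For a sigmoid normalized so that $\sigma(-\infty) = 0$ one reflects the construction.) By continuity of the determinant, $\det M(c) \neq 0$ for all sufficiently large $c_1, \ldots, c_n$; fixing such values gives the desired $p^\ast$, and together with the first paragraph this finishes the proof. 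The main obstacle is this last construction: producing, for an essentially arbitrary analytic sigmoidal $\sigma$, one last-layer weight configuration whose responses on $n$ distinct inputs are linearly independent. The reduction via analyticity is soft, and the propagation of distinctness through the earlier layers is routine.
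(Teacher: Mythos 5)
The paper does not prove this statement: Theorem~\ref{openanddense} is quoted directly from \cite{fullrank} and invoked as a black box, so there is no internal proof to compare your attempt against. Your argument is, however, a correct and essentially standard self-contained proof. The soft reduction---the complement of $\Omega$ is the zero set of the real-analytic function $g$ given by the sum of squares of the $n\times n$ minors of $\Phi_{\ell-1}$, hence has measure zero as soon as one exhibits a single $p^\ast$ with $g(p^\ast)\neq 0$---is exactly the reduction that underlies \cite{fullrank} and the related result of \cite{ruoyumeasure0}. Your construction of $p^\ast$ is also sound: you propagate distinctness of the $n$ inputs through the first $\ell-2$ layers using genericity of the weight matrices and injectivity of $\sigma$, project to a generic functional $w$ to get distinct scalars $s_1<\cdots<s_n$, interlace thresholds $\tau_j$, and scale the last hidden layer so that $\sigma(c_j(s_i-\tau_j))$ saturates to $\sigma(+\infty)=0$ strictly below the diagonal and to $\sigma(-\infty)$ on and above it. Since $\sigma$ is bounded and strictly increasing with $\sigma(+\infty)=0$, the limit $\sigma(-\infty)$ exists and is strictly negative, so the limiting matrix is upper triangular with nonzero diagonal and continuity of the determinant gives a working finite choice of the $c_j$. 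This saturation-to-triangular argument is the standard device for bounded sigmoidal activations, so your proof follows the usual route and correctly identifies exactly where each hypothesis on $\sigma$ (analytic, strictly increasing, bounded, zero right-horizontal asymptote) and on the architecture ($m_{\ell-1}\geq n$, distinct $x_i$) enters. One small point worth making explicit in a final writeup is that $\sigma(-\infty)$ is well defined and nonzero precisely for the reason above, since that is the linchpin of the nondegeneracy of the limiting matrix.
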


\begin{theorem}
Consider a fully connected feedforward network with $L2$ loss and an activation function $\sigma$ which is an analytic, strictly increasing, bounded function with $\lim\limits_{t \rightarrow \infty} \sigma(t) = 0$, and each layer of width greater than $n$ where $n$ is the number of training samples, and all input data points $x_i$ distinct.  Then there is a set $\Omega$ within the parameter space $\R^d$ such that the measure of $\R^d \setminus \Omega$ is zero and for any point $p \in \Omega$, there is a line $C_p$ from $p$ to some global minimum $g_p$ such that $L$ is strictly decreasing along $C_p$.
\end{theorem}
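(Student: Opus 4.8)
The plan is to combine Theorem~\ref{openanddense} with Corollary~\ref{omega}, the only real work being to transport the measure-zero statement from the intermediate parameter space $\R^I = \R^{d_1+\cdots+d_{\ell-1}}$ up to the full parameter space $\R^d = \R^I \times \R^F$.

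First I would let $\Omega_I \subset \R^I$ denote the set of parameter vectors $p_I$ for which $\Phi_{\ell-1}(p_I,x_1,\ldots,x_n)$ has full rank. Since the activation function $\sigma$ is analytic, strictly increasing, bounded, with $\lim_{t\to\infty}\sigma(t)=0$, and the $x_i$ are distinct, Theorem~\ref{openanddense} applies and tells us that $\R^I \setminus \Omega_I$ has measure zero in $\R^I$. I would then set $\Omega := \Omega_I \times \R^F \subseteq \R^d$. Because $\R^d \setminus \Omega = (\R^I \setminus \Omega_I) \times \R^F$ is the product of a measure-zero set with a Euclidean space, Fubini's theorem gives that $\R^d \setminus \Omega$ has measure zero in $\R^d$, which is the first assertion.

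For the second assertion, fix $p = (p_I, p_F) \in \Omega$, so that $p_I \in \Omega_I$ and hence $\Phi_{\ell-1}(p_I, x_1, \ldots, x_n)$ has full rank. Since each layer has width greater than $n$, in particular $m_{\ell-1} \geq n$, and $\sigma$, being analytic, is smooth, the hypotheses of Corollary~\ref{omega} hold for this $\Omega$. If $p$ is not a global minimum of $L$, Corollary~\ref{omega} produces a line $C_p$ from $p$ to a global minimum $g_p$ along which $L$ is strictly (in fact quadratically) decreasing, which is exactly what is claimed. If instead $p$ is already a global minimum, one takes $g_p = p$ and $C_p$ the degenerate segment $\{p\}$, so that the statement holds trivially; equivalently, the content of the theorem is for non-minimal $p$, which is the substantive case.

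There is no genuine obstacle here: the argument is a repackaging of two results already established, and the only point requiring care is the bookkeeping between the two ambient parameter spaces $\R^I$ and $\R^d$ together with the Fubini step that pushes the null set forward. One should also verify that the hypotheses on $\sigma$ in the present statement are at least as strong as those needed by both Theorem~\ref{openanddense} (analytic, strictly increasing, bounded, vanishing at $+\infty$) and Corollary~\ref{omega} (smooth), which they are, so no additional assumptions are incurred.
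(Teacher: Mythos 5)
Your proposal is correct and follows essentially the same route as the paper, which simply states that the result "follows directly from Corollary~\ref{omega} and Theorem~\ref{openanddense}." You have merely spelled out the bookkeeping the paper leaves implicit — in particular the Fubini step lifting the null set from $\R^I$ to $\R^d$ and the trivial handling of points that are already global minima — but the underlying argument is identical.
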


\begin{proof}
This follows directly from Corollary \ref{omega} and Theorem \ref{openanddense}.
\end{proof}

In \cite{ruoyumeasure0}, Sun et. al. prove a similar analogous to Theorem \ref{openanddense}, for feedforward networks under different assumptions.  Their assumptions allow for a different class of activation functions, namely those which are analytic and have at least $n$ nonvanishing derivatives at the origin.  This allows such commonly used activation functions as tanh and swish.  In their setting, they prove that the complement of the set $\Omega$ where $\Phi_{\ell-1}(p, x_1, ..., x_n)$ has measure 0.  

\subsection{All critical points which are not global minima}

In this section, we are interested in the local geometry of $L$ near critical points $p$ that are not global minima.  An important case is when $p$ is a local minimum.  A simple example of a local nonglobal minimum is the origin, for the function $f(x) = x^4 + 5 x^3 + 6 x^2$.  There, the origin $p$ is an isolated local minimum, a strict local minimum, and lies in a spurious valley.  (Recall these notions were defined in Definitions \ref{spuriousvalley} -- \ref{isolatedcritpoint}.)

In general, these three properties do not always coincide.  We begin here by untangling the relationships between these properties and discuss which hold for local minima in very wide neural networks.  The relationships are summarized in Figure \ref{webofproperties}.

\begin{figure}
\begin{center}
\includegraphics[width=3.5in]{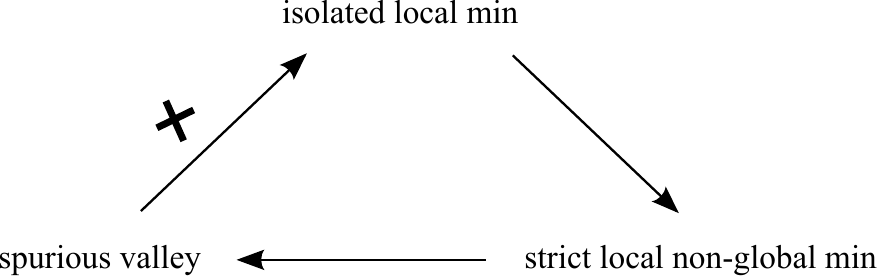}
\end{center}
\caption{Implications between existence of isolated local minima, strict local minima, and spurious valleys.
\label{webofproperties}}
\end{figure}

Let us consider the sides of the triangle in Figure \ref{webofproperties}.  One leg is proved in \cite{fullrank}, where Nguyen, Mukkamala, and Hein show that if a function has no spurious valleys then it has no strict local nonglobal minima.  One leg does not hold - it is not true that if $L$ has a spurious valley then it must have an isolated local minimum, and the function $f(x,y) = 5 x^4 - 5 x^2 + x$ gives an example where that implication doesn't hold.  On the last leg of the triangle, several authors \cite{numericaloptimization}, \cite{nonlinearprogramming}, \cite{dun}, have stated that if $p$ is an isolated local minimum then $p$ is a strict local minimum, but we do not know of a proof in the literature, so we include one here, based on an argument suggested by Ruoyu Sun.

\begin{lemma} \label{NOstrictNOisolated}
Suppose $L\colon \R^d \to \R$ is a continuous function with no strict local, non-global minima.
Then $L$ has no isolated local, non-global minima.
\end{lemma}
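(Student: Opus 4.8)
The plan is to argue by contrapositive in a slightly indirect way: rather than directly producing a strict local non-global minimum from an isolated one, I would show that an isolated local, non-global minimum $p$ is automatically strict. Suppose $p$ is a local, non-global minimum of $L$ that is isolated \emph{as a critical point}, or at least isolated among local minima; choose $\epsilon>0$ so that $p$ is the only local minimum in the closed ball $\overline{B_\epsilon(p)}$ and so that $L(q)\geq L(p)$ for all $q\in \overline{B_\epsilon(p)}$ (shrinking $\epsilon$ if necessary). The key step is to examine the set $Z=\{q\in \overline{B_\epsilon(p)} : L(q)=L(p)\}$, which is a closed subset of the compact ball and hence compact. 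If $p$ were not a strict local minimum, then $Z$ would contain points other than $p$ arbitrarily close to $p$, i.e. $p$ would not be isolated in $Z$.

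The heart of the argument is then to locate, inside $Z\setminus\{p\}$, another local minimum of $L$, contradicting the choice of $\epsilon$. First I would observe that since $L$ attains its minimum value $L(p)$ on the compact ball $\overline{B_\epsilon(p)}$ exactly on the compact set $Z$, any point of $Z$ lying in the open ball $B_\epsilon(p)$ is automatically a local minimum of $L$ (it realizes the minimum over a whole neighborhood). So it suffices to find a point of $Z$ other than $p$ that lies in the \emph{open} ball. If $p$ is not a strict local minimum, there is a sequence $q_k\to p$ with $L(q_k)=L(p)$; for $k$ large these $q_k$ lie in $B_\epsilon(p)$ and are distinct from $p$, and each is a local minimum of $L$. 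This directly contradicts the assumption that $p$ is the only local minimum in $\overline{B_\epsilon(p)}$. Hence $p$ must be a strict local minimum, which is what we wanted.

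Put in the exact form of the statement: assume $L$ has no strict local, non-global minima, and suppose toward a contradiction that $L$ has an isolated local, non-global minimum $p$. By the above, $p$ being isolated among local minima forces $p$ to be a strict local minimum; but $p$ is also non-global, so $p$ is a strict local, non-global minimum, contradicting the hypothesis. Therefore $L$ has no isolated local, non-global minima. I should be slightly careful about the definition of ``isolated local minimum'' being used (Definition~\ref{isolatedcritpoint}): it says there is no \emph{other local minimum} within an $\epsilon$-ball, which is exactly the form used above, so no adjustment is needed.

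The main obstacle is the passage ``a point of $Z$ in the open ball is a local minimum.'' This is where compactness is essential: on $\overline{B_\epsilon(p)}$ the function $L$ is $\geq L(p)$ everywhere and equals $L(p)$ on $Z$, so any $q\in Z\cap B_\epsilon(p)$ has a whole neighborhood (inside $B_\epsilon(p)$) on which $L\geq L(p)=L(q)$, making $q$ a local minimum — no smoothness, only continuity and the local-minimum inequality, is used. The only subtlety to get right is ensuring the shrunken $\epsilon$ simultaneously makes $p$ a genuine minimum over the closed ball \emph{and} keeps $p$ the unique local minimum there; both are available because ``local minimum'' and ``isolated'' each come with their own $\epsilon$, and we take the smaller. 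No hard analysis is required beyond this bookkeeping.
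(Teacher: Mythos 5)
Your argument is correct and rests on the same key observation as the paper's: near a local minimum $p$ the loss satisfies $L \geq L(p)$ on a small ball, so any point on the level set $\{L = L(p)\}$ inside that ball is itself a local minimum, and non-strictness supplies such points arbitrarily close to $p$, contradicting isolation. The paper packages this as a sequence argument with an inner contradiction (choose $q_k \to p$ with $L(q_k) = L(p)$; if $q_k$ were not a local minimum one could find $q_k' \to p$ with $L(q_k') < L(p)$), but the content is identical to your direct statement about the set $Z$. One minor remark: compactness of $\overline{B_\epsilon(p)}$ and of $Z$ plays no role in your proof, despite your flagging it as essential --- the step ``any $q \in Z \cap B_\epsilon(p)$ is a local minimum'' follows solely from the choice of $\epsilon$ making $L \geq L(p)$ on the ball, and the claim that $L$ attains its minimum on the ball along $Z$ is part of that choice rather than a consequence of compactness.
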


\begin{proof}
We will prove this by showing that if $p$ is a local, non-global minimum that is not strict, then $p$ is not an isolated minimum.
Fix such a $p$.
Since $p$ is not strict, we may choose a sequence $q_k$ such that $|q_k - p| \leq 2^{-k}$ and $L(q_k) = L(p)$ for all $k$.
I claim that for $k$ large enough, $q_k$ is a local minimum.
Indeed, suppose that this is not the case.
Then for every $k$, there is a $q_k'$ with $|q_k' - q_k| \leq 2^{-k}$ and $L(q_k') < L(q_k)$.
We then have $|q_k' - p| \leq |q_k' - q_k| + |q_k - p| \leq 2^{-k+1} \to 0$, and $L(q_k') < L(q_k) = L(p)$, hence $L(p)$ is not a local minimum, which is a contradiction.
This shows that $p$ must indeed not be isolated.

We have now shown that if $p$ is a local, non-global local minimum that is not strict, then $p$ must not be isolated.
Since this applies to all such $p$, the statement of the lemma must hold.
\end{proof}

As mentioned at the beginning of this section, under various assumptions, both \cite{fullrank} and \cite{venturibruna} show that for very wide feedforward neural networks the loss function $L$ has no spurious valleys, and \cite{ruoyumeasure0} shows the related result that $L$ has no strict local minima.  It is possible to extend each of these results to the statement that under the appropriate assumptions, $L$ has no isolated local non-global minima.  Here, we do this for the Theorem of \cite{ruoyumeasure0}.    

\begin{proposition} \label{noisolatedmin}
For a fully connected feedforward neural network with width $m$ larger than the number of training samples $n$, continuous activation function $\sigma$ and L2 loss function $L$, $L$ has no isolated local non-global minima.
\end{proposition}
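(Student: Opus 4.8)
The plan is to deduce the proposition from Lemma \ref{NOstrictNOisolated} together with the quadratic slice structure of $L$ established in Proposition \ref{quadratic}. By Lemma \ref{NOstrictNOisolated} it suffices to show that $L$ has no strict local non-global minima, and in fact I will show the stronger statement that $L$ has \emph{no} strict local minima at all, by proving that every local minimum $p$ of $L$ lies on a positive-dimensional affine subspace along which $L$ is constant. Note that Proposition \ref{quadratic} and everything used below require only that the feature vectors $\phi_{\ell-1}(p_I,x_\alpha)$ be well defined, so the continuity of $\sigma$ assumed here is enough.

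First I would fix a local minimum $p=(p_I,p_F)\in\R^I\times\R^F$ of $L$ and pass to the fiber $\pi^{-1}(p_I)$. By Proposition \ref{quadratic}, $L|_{\pi^{-1}(p_I)}$ is a quadratic function of the final-layer parameters, and since $L\geq 0$ it is a nonnegative, hence convex, quadratic; being a sum of squares of affine functions it attains its minimum on an affine subspace $V\subset\pi^{-1}(p_I)$. The one mild subtlety is to remember that a local minimum of a convex function is automatically global: since $p$ is a local minimum of $L$ it is a local, therefore global, minimum of $L|_{\pi^{-1}(p_I)}$, so $p\in V$ and $L\equiv L(p)$ on $V$.

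Next I would bound $\dim V$ from below. Writing the restriction as $\sum_{\alpha=1}^n\bigl(M_\ell\,\phi_{\ell-1}(p_I,x_\alpha)+b_\ell-y_\alpha\bigr)^2$, its minimum locus $V$ is a coset of the kernel of the linear map $(M_\ell,b_\ell)\mapsto\bigl(M_\ell\,\phi_{\ell-1}(p_I,x_\alpha)+b_\ell\bigr)_{\alpha=1}^n$ from $\R^{m_\ell(m_{\ell-1}+1)}$ to $\R^{m_\ell n}$. The width hypothesis $m>n$ gives $m_{\ell-1}\geq m>n$, so the domain has strictly larger dimension than the codomain and $\dim V\geq m_\ell(m_{\ell-1}+1-n)\geq 1$; this is exactly the count behind Theorem \ref{dimlinuniform}. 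Hence $V$ is a positive-dimensional affine subspace through $p$ on which $L$ is constant equal to $L(p)$, so every neighborhood of $p$ contains points $q\neq p$ with $L(q)=L(p)$, i.e.\ $p$ is not a strict local minimum.

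Since $p$ was an arbitrary local minimum, $L$ has no strict local minima, in particular no strict local non-global minima, and Lemma \ref{NOstrictNOisolated} then yields that $L$ has no isolated local non-global minima. I do not expect a serious obstacle: the argument is essentially a bookkeeping combination of the fiberwise quadratic structure, the convexity observation, and the kernel-dimension inequality forced by $m>n$, all of which are already available or immediate in the present setting.
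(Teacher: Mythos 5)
Your proof is correct, and it takes a genuinely different route from the paper's for the key step. Both arguments run the same final reduction: establish that $L$ has no strict local (non-global) minima and then invoke Lemma \ref{NOstrictNOisolated}. The paper obtains the ``no strict local minima'' ingredient by citing Theorem 1 of \cite{ruoyumeasure0} as a black box. You instead prove it directly from the fiberwise quadratic structure of Proposition \ref{quadratic}: a local minimum $p=(p_I,p_F)$ of $L$ is a local, hence global, minimum of the convex quadratic $L|_{\pi^{-1}(p_I)}$, whose minimizing set is a coset of the kernel of the linear map $(M_\ell,b_\ell)\mapsto(M_\ell\,\phi_{\ell-1}(p_I,x_\alpha)+b_\ell)_\alpha$ from $\R^{m_\ell(m_{\ell-1}+1)}$ to $\R^{m_\ell n}$, which has positive dimension once $m_{\ell-1}\geq m>n$. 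The paper's proof is shorter and leans on prior literature; yours is self-contained, makes the mechanism (quadratic slices plus a $d_\ell$-versus-$m_\ell n$ dimension count) transparent, and in effect proves the special case of \cite{ruoyumeasure0} needed here. It also anticipates the dimension bound that later appears as Theorem \ref{dimlinuniform}, which you correctly flag. One small note: the reasoning you used in fact shows no local minimum at all is strict, which is slightly stronger than what Lemma \ref{NOstrictNOisolated} requires, but that only strengthens the conclusion.
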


\begin{proof}
Under these assumptions  Theorem 1 of \cite{ruoyumeasure0} proves that $L$ has no strict local minima.  (In fact Theorem 1 of \cite{ruoyumeasure0} holds even in a more general setting that what we're studying here.)  By Lemma \ref{NOstrictNOisolated}, we conclude that $L$ has no isolated local minima.
\end{proof}

The analogous statements hold for local maxima.  This kind of argument does not however hold for other critical points, and in fact, a simple nondegenerate saddle point provides an example of an isolated critical point which lies on a positive dimensional level set.  Namely, if we let
$$f(x,y) = x^2-y^2,$$
the origin is an isolated critical point, but this critical point lies on a level set which is the union of the lines $x=y$ and $x=-y$.

In this section, we aim to extend the result that for very wide networks $L$ has no spurious valleys to a more detailed analysis of the local geometry of $L$ near its critical points.  Suppose $p=(p_I, p_F)$ is a critical point of $L$ but is not a global minimum of $L$.  

To start with, we observe that by Lemma \ref{fullrankglobalmin}, $\Phi_{\ell-1}(p_I, x_1, ..., x_n)$ cannot have full rank.  Next, we note that in each subspace corresponding to a vertical slice in Figure \ref{slices}, the loss function $L$ is a quadratic function.  Furthermore, $L$ is a nonnegative function. Therefore for any fixed slice $P$, $L|_P$ is a nonnegative quadratic function.  Let $Q|_P$ denote the quadratic form given by the leading order terms.  That is, $Q|_P$ is $L|_P$ minus all linear and constant terms.  Since $Q|_P$ is also nonnegative, it is a positive semi-definite quadratic form. 

Note that for any semi-definite quadratic form, all local minima are global, and the locus of global minima is connected, in particular, is a linear subspace whose codimension is the rank of the quadratic form.  So we begin by computing the rank of the semi-definite quadratic form $Q|_P$.  

\begin{lemma}
Suppose that $m_{\ell-1} > n, m_\ell$ and that in a fixed slice $p_I$, the rank of $\Phi_{\ell-1}(p_I, x_1, ..., x_n) = r$.  Then in that slice, the rank of the positive semi-definite quadratic form $Q|_{p_I}$ is less than or equal to $r m_\ell$.  
\end{lemma}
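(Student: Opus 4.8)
The plan is to work entirely inside the fixed slice $P = \pi^{-1}(p_I) = \{p_I\}\times\R^F$, where $\R^F$ is the space of last-layer parameters $(M_\ell,b_\ell)$, of dimension $d_\ell = (m_{\ell-1}+1)m_\ell$. By the computation in the proof of Proposition \ref{quadratic}, on this slice
\[
L|_P(M_\ell,b_\ell) = \sum_{\alpha=1}^n \bigl|\,M_\ell\,v_\alpha + b_\ell - y_\alpha\,\bigr|^2,
\]
where $v_\alpha := \phi_{\ell-1}(p_I,x_\alpha)$ are the fixed feature vectors produced by the first $\ell-1$ layers; the matrix $\Phi_{\ell-1}(p_I,x_1,\dots,x_n)$ is, up to the homogenizing row, exactly the matrix whose columns are the $v_\alpha$, so the dimension of their span is $r$. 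Expanding the squares and discarding all terms that are linear or constant in $(M_\ell,b_\ell)$ — that is, $-2\langle y_\alpha,\,M_\ell v_\alpha+b_\ell\rangle$ and $|y_\alpha|^2$ — leaves the leading quadratic form $Q|_P(M_\ell,b_\ell) = \sum_{\alpha=1}^n |M_\ell v_\alpha + b_\ell|^2$.

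Next I would homogenize the last layer: write $\hat v_\alpha = (v_\alpha,1)$ and $\hat M = [\,M_\ell \mid b_\ell\,]$, so $M_\ell v_\alpha + b_\ell = \hat M \hat v_\alpha$ and $Q|_P(\hat M) = \sum_\alpha |\hat M\hat v_\alpha|^2 = \operatorname{tr}\!\bigl(\hat M\, V\, \hat M^{\top}\bigr)$ with $V := \sum_\alpha \hat v_\alpha \hat v_\alpha^{\top}$ a positive semidefinite matrix. The structural point is that this form decouples over the output neurons: decomposing $\R^F$ as the direct sum indexed by the $m_\ell$ rows $w_1,\dots,w_{m_\ell}$ of $\hat M$, one has $Q|_P = \sum_{j=1}^{m_\ell} w_j^{\top} V w_j$, i.e. $Q|_P$ is an orthogonal direct sum of $m_\ell$ identical copies of the quadratic form $w\mapsto w^\top V w$. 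Hence $\operatorname{rank}(Q|_P) = m_\ell\cdot\operatorname{rank}(V)$. Finally, $V = \hat\Phi_{\ell-1}\hat\Phi_{\ell-1}^{\top}$ where $\hat\Phi_{\ell-1}$ has the $\hat v_\alpha$ as columns, so $\operatorname{rank}(V) = \operatorname{rank}(\hat\Phi_{\ell-1}) = \dim\operatorname{span}\{\hat v_1,\dots,\hat v_n\} \le r$, and the claimed bound $\operatorname{rank}(Q|_P) \le r\,m_\ell$ follows.

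The routine ingredients are the expansion of a square to isolate the leading form, the identity $\operatorname{rank}(AA^\top)=\operatorname{rank}(A)$, and recognizing the block-diagonal (Kronecker) structure $V\oplus\cdots\oplus V$. The one point that needs care, and the only place I expect friction, is bookkeeping the bias of the last layer: one must be consistent about whether the homogenizing all-ones row is counted in $\Phi_{\ell-1}$. With the convention of \eqref{f} it is, and then $\operatorname{rank}(V)=r$ on the nose; if instead $\Phi_{\ell-1}$ is taken to have only $m_{\ell-1}$ rows, appending the ones row changes the rank by at most $1$, so $\operatorname{rank}(Q|_P)\le (r+1)m_\ell$, and in either convention $\operatorname{rank}(Q|_P)\le n\,m_\ell$, which is all that the downstream results use. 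The hypotheses $m_{\ell-1}>n$ and $m_{\ell-1}>m_\ell$ play no role in the rank computation itself; they merely ensure $d_\ell > n m_\ell$, so that the complementary kernel of $Q|_P$ — the linear subspace of $\R^F$ on which $L|_P$ is constant — is positive-dimensional, which is what is exploited in the sequel.
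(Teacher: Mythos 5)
Your proposal is correct and follows essentially the same route as the paper: restrict to the slice $\pi^{-1}(p_I)$, observe that $L$ is quadratic there, identify the leading quadratic part $Q|_P$, and exploit the fact that it decomposes as a direct sum over the $m_\ell$ output coordinates so that the rank is $m_\ell$ times the rank of the single-output form. Your packaging via $Q|_P = \operatorname{tr}(\hat M\,V\,\hat M^\top)$ with $V = \hat\Phi_{\ell-1}\hat\Phi_{\ell-1}^\top$ is a cleaner way of saying what the paper does by hand with the $v_i^2$ terms, and it upgrades the bound to the exact identity $\operatorname{rank}(Q|_P) = m_\ell\cdot\operatorname{rank}(V)$, which is mildly stronger than the paper's inequality.

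The one substantive point of divergence is the bias term, and here your treatment is actually more careful than the paper's own proof. The paper defines $Q|_P$ as the leading-order (degree-two) part of $L|_P$, which upon expanding $|M_\ell\xi + b_\ell - y|^2$ is $|M_\ell\xi + b_\ell|^2$, yet the computation in the paper's proof silently drops $b_\ell$ and works with $|M_\ell\xi|^2$. Your homogenization ($\hat v_\alpha = (v_\alpha,1)$, $\hat M = [M_\ell\mid b_\ell]$) is the correct bookkeeping, and it shows $\operatorname{rank}(Q|_P) = m_\ell\,\operatorname{rank}(\hat\Phi_{\ell-1})$. Since the paper defines $\phi_{\ell-1}$ as mapping into $\R^{m_{\ell-1}}$ (so $\Phi_{\ell-1}$ does \emph{not} carry the all-ones row, despite the appearance of \eqref{f}), you have $\operatorname{rank}(\hat\Phi_{\ell-1}) \in \{r, r+1\}$, so the sharp bound is $(r+1)m_\ell$ rather than $r m_\ell$. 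Your argument thus flags a small off-by-one in the stated bound, which also propagates to Theorem \ref{dimlin}. You are also right that this is harmless for the uniform statements (Theorem \ref{dimlinuniform}, Corollary \ref{critzeroeigenuniform}), since $\operatorname{rank}(\hat\Phi_{\ell-1}) \le n$ always. Your final remark — that the hypotheses $m_{\ell-1} > n, m_\ell$ play no role in the rank computation itself and merely guarantee the kernel is positive-dimensional — is correct.
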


\begin{proof}
First we consider the case of a feedforward neural network trained on a single data point $(x,y)$.

Let 
$$A = 
\begin{pmatrix}
a_{11} &   & a_{1 m_{\ell-1}} \\
\vdots & \hdots & \vdots \\
a_{m_\ell 1} &  & a_{m_\ell m_{\ell-1}} \\
\end{pmatrix}
$$
and
$$b = 
\begin{pmatrix}
b_1 \\
\vdots \\
b_{m_\ell}
\end{pmatrix}
$$
encode the affine linear transformation from the last hidden layer of the network to the output, that is a map from $\R^{m_{\ell-1}}$ to $\R^{m_\ell}$.  

Let
$$y = 
\begin{pmatrix}
y_1 \\
\vdots \\
y_{m_\ell}
\end{pmatrix}
$$
denote the output of the single data point, and let
$$\xi = 
\begin{pmatrix}
\xi_1 \\
\vdots \\
\xi_{m_{\ell-1}}
\end{pmatrix}
$$
denote the output of the last hidden layer, that is 
$$
\xi = \phi_{\ell-1}(p_1, ..., p_{\ell-1}, x).
$$

Fixing a slice $P$ is equivalent to fixing $(p_1, ..., p_{\ell-1})$, so we treat $\xi$ as a constant in this proof.

The quadratic function $L|_P$ is then
$$|M^\ell \xi + b^\ell - y|^2$$
and the quadratic form $Q|_P$ is
$$|M^\ell \xi |^2.$$

We compute the rank of this quadratic form, considered as a function from $\R^{m_{\ell-1} m_\ell}$ to $\R$.  That is, with the entries of $M^\ell$ 
considered as variables, and the entries of $\xi$ considered as constants.  

Well,
$$ |M^\ell \xi |^2 = 
\begin{pmatrix}
v_1 \\
\vdots \\
v_{m_\ell} \\
\end{pmatrix} ^2 
= v_1^2 + ... + v_{m_\ell}^2
$$
where
$$v_i = a_{i 1} \xi_1 + ... + a_{i m_{\ell-1}} \xi_{m_{\ell-1}}.$$

The rank of each quadratic form $v_i^2$ is one.  Note that the variables that appear in $v_i$ are distinct from those in $v_j$, for all $i \neq j$, hence the rank of the quadratic form $v_1^2 + ... + v_{m_\ell}^2$ is exactly $m_\ell$.  

Finally, we consider the case of $n$ data points, where the rank of the matrix 
$$
\Phi_{\ell-1}(p_I, x_1, ..., x_n) = 
\begin{pmatrix}
\xi_1, ..., \xi_n
\end{pmatrix}
$$ 
is $r$.  

Without loss of generality, suppose the first $r$ columns of $\Phi_{\ell-1}(p_I, x_1, ..., x_n)$ are linearly independent.  Then we can write the matrix $\Phi_{\ell-1}(p_I, x_1, ..., x_n)$ as
$$
\Phi_{\ell-1}(p_I, x_1, ..., x_n) = 
\begin{pmatrix}
\xi_1, ..., \xi_r, \chi_1, ..., \chi_{n-r}
\end{pmatrix}
$$ 
where each $\chi_i$ is a linear combination of $\xi_1, ..., \xi_r$.

In this case, $Q|_P$ is of the form
\begin{align*}
& \left( c_1\cdot \xi_1\right)^2 + ... + \left( c_{m_\ell} \cdot \xi_1\right)^2 + ... \\
...\\
& \left( c_1\cdot \xi_r\right)^2 + ... + \left( c_{m_\ell} \cdot \xi_r\right)^2 + ... \\
& \left( c_1 \cdot \chi_{1} \right)^2 + ...  + \left( c_{m_\ell} \cdot \chi_{1}\right)^2 \\
...\
& \left( c_1 \cdot \chi_{n-r} \right)^2 + ...  + \left( c_{m_\ell} \cdot \chi_{n-r}\right)^2 \\
\end{align*}
where
$$c_i = 
\begin{pmatrix}
a_{i 1} \\
\vdots\\
a_{i m_{\ell-1}} 
\end{pmatrix}.
$$

Such a quadratic form has rank at most $r m_\ell$, completing the proof.
\end{proof}

As a result, we can deduce the following.  

\begin{theorem}  \label{dimlin}
For any critical point $p=(p_I, p_F)$ of $L$, the level set of $p$ contains a linear subspace of dimension $(m_{\ell-1}+1 - r) m_\ell$, where $r = rank \Phi_{\ell-1}(p_I, x_1, ..., x_n)$.  
\end{theorem}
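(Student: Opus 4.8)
The plan is to combine the structure of $L$ as a fiberwise quadratic function (Proposition \ref{quadratic}) with the rank bound just established on the leading quadratic form $Q|_{p_I}$. Fix a critical point $p = (p_I, p_F)$ and work inside the slice $P = \pi^{-1}(p_I) \cong \R^F = \R^{d_\ell}$, which has dimension $(m_{\ell-1}+1)m_\ell$. On this slice $L|_P$ is a nonnegative quadratic function, so its leading form $Q|_P$ is positive semidefinite of rank $\rho \le r m_\ell$ by the preceding lemma. First I would observe that since $p$ is a critical point of $L$ on all of $\R^d$, its restriction is a critical point of $L|_P$; a critical point of a nonnegative quadratic function is necessarily a global minimum of that quadratic, and the set of global minima of a positive semidefinite quadratic function is an affine subspace of $P$ of codimension exactly $\rho = \operatorname{rank}(Q|_P)$.

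Next I would argue that $L$ is constant on this affine subspace: every point of it is a global minimum of $L|_P$, and $L|_P$ takes a single value there, namely $L(p)$. Hence this entire affine subspace lies inside the level set $L^{-1}(L(p))$, and it contains $p$ itself. Its dimension is
\begin{align*}
\dim P - \rho \;=\; (m_{\ell-1}+1)m_\ell - \rho \;\ge\; (m_{\ell-1}+1)m_\ell - r m_\ell \;=\; (m_{\ell-1}+1-r)m_\ell,
\end{align*}
using the lemma's bound $\rho \le r m_\ell$. So the level set of $p$ contains a linear (affine) subspace of at least this dimension; one can translate it to pass through the origin if a genuinely linear subspace is wanted, but as it already contains $p$ it serves as the claimed subspace through $p$ inside the level set.

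One subtlety I would be careful about is the distinction between "affine subspace" and "linear subspace" in the statement — the minimizing set of $L|_P$ is affine, and the theorem should be read as asserting a flat (affine-linear) piece of the level set through $p$; this matches the usage in Corollary \ref{omega}, where the line $C_p$ need not pass through the origin. I would also note the hypotheses of the lemma, $m_{\ell-1} > n, m_\ell$, are compatible with the very wide regime in which this theorem is applied, and that when $r = m_{\ell-1}+1$ (or more precisely when $r m_\ell \ge (m_{\ell-1}+1)m_\ell$) the bound is vacuous, consistent with the fact that full-rank slices give isolated minima within the slice via Lemma \ref{fullrankglobalmin}.

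The main obstacle is really just bookkeeping rather than a deep difficulty: making sure the rank count transfers correctly from $Q|_P$ as a quadratic form on the $m_{\ell-1} m_\ell$ weight variables (as computed in the lemma) to the full slice $\R^{d_\ell}$ of dimension $(m_{\ell-1}+1)m_\ell$ including the $m_\ell$ bias variables $b^\ell$. Since $Q|_P = |M^\ell \xi|^2$ has no dependence on the biases $b^\ell$, those $m_\ell$ bias directions lie in the kernel of $Q|_P$ automatically, so the rank of $Q|_P$ on the full slice is still $\le r m_\ell$ and the codimension computation above goes through unchanged. I would state this explicitly so the reader sees the $+1$ in $(m_{\ell-1}+1-r)m_\ell$ is accounted for by the bias directions being flat.
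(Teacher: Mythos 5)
Your proposal is correct and follows essentially the same route as the paper's proof: restrict to the slice $\pi^{-1}(p_I)$, invoke Proposition \ref{quadratic} and the preceding rank lemma to bound the rank of the positive semidefinite leading form by $r m_\ell$, and conclude the minimizing set of the quadratic slice is an affine subspace of codimension at most $r m_\ell$ lying in the level set of $p$. You are somewhat more explicit than the paper on a few details worth keeping — why nonnegativity forces the minimum of the slice quadratic to be attained, and why the $m_\ell$ bias directions automatically lie in the kernel of $Q|_P$ so the rank bound transfers from the weight variables to the full $(m_{\ell-1}+1)m_\ell$-dimensional fiber — but the underlying argument is the same.
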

\begin{proof}
If $p$ is a critical point, it means that $\Phi_{\ell-1}(p_1, ..., p_{\ell-1}, x_1, ..., x_n)$ is not full rank.  Let us decompose $(p_1, ..., p_{\ell-1}, p_\ell) = (p_I, p_F)$

Then in the slice determined by $p_I$, $Q|_{\pi^{-1}(p_I)}$ is a semi-definite quadratic form of rank at most $r m_\ell$.  Moreover, the dimension of the locus of global minima of $L|_{\pi^{-1}(p_I)}$ in the slice is equal to the dimension of the locus of global minima of $Q|_{\pi^{-1}(p_I)}$.  
That means that there is a linear subspace $S$ of codimension at most $r m_\ell$ in $\R^F = \R^{(m_{\ell-1}+1) m_\ell}$ such that $S$ is in the level set containing $p$.  This means that 
$$\dim(S) \geq (m_{\ell-1} + 1 - r) m_\ell,$$
which completes the proof.
\end{proof}

Note this result implies that $(m_{\ell-1}+1 - r) m_\ell$ is a lower bound on the number of zero eigenvalues of the Hessian of $L$ at $p$, which we record as a corollary now.  However, Theorem \ref{dimlin} is stronger, in that it gives us some partial knowledge of the global geometry of $L$, not just the geometry of $L$ in a neighborhood of $p$.  

\begin{corollary}\label{critzeroeigen}
For any critical point $p$ of $L$, $Hess(L)$ at $p$ has at least $(m_{\ell-1}+1 - r) m_\ell$ zero eigenvalues, where $r = rank \Phi_{\ell-1}(p_I, x_1, ..., x_n)$.
\end{corollary}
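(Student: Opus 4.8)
The plan is to deduce Corollary \ref{critzeroeigen} directly from Theorem \ref{dimlin} by the standard fact that the Hessian of $L$ at a critical point $p$ controls the dimension of any linear subspace through $p$ on which $L$ is constant to second order. Concretely, suppose $V \subseteq \R^d$ is a linear subspace with $p \in V$ and $L|_V \equiv L(p)$, as produced by Theorem \ref{dimlin} with $\dim V = (m_{\ell-1}+1-r)m_\ell$. For any $v \in V - p$ (the direction vectors of $V$), the function $t \mapsto L(p+tv)$ is identically equal to $L(p)$, so differentiating twice in $t$ at $t=0$ gives $v^{T}\,Hess(L)(p)\,v = 0$. Since $Hess(L)(p)$ is a real symmetric, hence positive-semidefinite-on-its-range-type, matrix — more precisely, at a critical point one need not assume definiteness, but the bilinear form $v \mapsto v^T Hess(L)(p) v$ vanishing on a subspace forces that subspace to lie in the kernel when the form is semidefinite — we must be slightly careful. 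The cleaner route: restrict attention to the slice $\pi^{-1}(p_I)$, where by Proposition \ref{quadratic} the function $L|_{\pi^{-1}(p_I)}$ is exactly quadratic, so its Hessian is constant and equals the symmetric matrix of the positive semidefinite form $Q|_{\pi^{-1}(p_I)}$; the subspace of global minima is precisely the kernel of that Hessian.

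First I would make explicit that the linear subspace $S$ from the proof of Theorem \ref{dimlin} sits inside $\R^F$, i.e. inside a single fiber $\pi^{-1}(p_I)$, and is the locus of minima of the positive semidefinite quadratic form $Q|_{\pi^{-1}(p_I)}$. For a positive semidefinite quadratic form $Q(v) = v^T H v$ with $H \succeq 0$, the minimizing set through the minimizer is exactly $p + \ker H$, and $\dim \ker H = \dim \R^F - \operatorname{rank} H \geq (m_{\ell-1}+1)m_\ell - r m_\ell = (m_{\ell-1}+1-r)m_\ell$. Next I would relate $\ker H$ to $\ker Hess(L)(p)$ on all of $\R^d$: the Hessian of $L$ on $\R^d$, written in block form with respect to the splitting $\R^d = \R^I \times \R^F$, has lower-right block equal to $H$ (the $p_F$–$p_F$ second derivatives of $L$ are exactly those of $L|_{\pi^{-1}(p_I)}$ since $p_I$ is held fixed). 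A vector of the form $(0, w)$ with $w \in \ker H$ need not automatically lie in $\ker Hess(L)(p)$ because of the off-diagonal $\R^I$–$\R^F$ block; however, since $L|_{\pi^{-1}(p_I)}$ is nonnegative and attains its minimum along $S$, every $w \in S - p$ is a direction of a genuine one-parameter family on which $L$ is globally constant, and the second directional derivative of $L$ along $(0,w)$ is $w^T H w = 0$, while nonnegativity of $L$ near $p$ along that direction forces the first directional derivative to vanish too — but more to the point, $w^T Hess(L)(p) w = 0$ with $Hess(L)(p)$ symmetric does not by itself give $w \in \ker$.

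The honest fix, and what I expect to be the only real subtlety, is that the bound on zero eigenvalues should be argued via the subspace of directions on which $L$ is constant to second order rather than first, so I would instead invoke the fact that $L$ vanishes to second order on the affine subspace $S$ (indeed $L \equiv L(p)$ identically on $S$), hence for any $w$ tangent to $S$ the entire Taylor expansion of $t \mapsto L(p+tw)$ is constant; in particular the restriction of the quadratic form $v \mapsto v^T Hess(L)(p) v$ to the subspace $\{(0,w) : w \in S-p\}$ is identically zero. Since $Hess(L)(p)$ restricted to the slice directions is $H \succeq 0$, a positive semidefinite form vanishing on a subspace means that subspace is contained in the kernel of $H$, hence those directions are null vectors of $Hess(L)(p)$ as well once we note the off-diagonal block annihilates them (because $L$ restricted to the full subspace spanned by these directions and $p$ is constant, so mixed second partials with $\R^I$ directions evaluated against these $w$ also vanish). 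Counting, $\ker Hess(L)(p)$ contains a subspace of dimension at least $(m_{\ell-1}+1-r)m_\ell$, so $Hess(L)(p)$ has at least that many zero eigenvalues. Finally I would remark, as the paper does, that Theorem \ref{dimlin} is strictly stronger: it records global, not merely infinitesimal, flatness, whereas the corollary extracts only the eigenvalue count. The main obstacle is purely bookkeeping — correctly handling the off-diagonal Hessian block so that slice-kernel directions are genuine global-Hessian kernel directions — and this is resolved by using that $L$ is literally constant on $S$, not just critical there.
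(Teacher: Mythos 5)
Your strategy is the same as the paper's: invoke Theorem \ref{dimlin} to produce the linear subspace $S \subset \pi^{-1}(p_I)$ of dimension $k = (m_{\ell-1}+1-r)m_\ell$ on which $L$ is constant, then convert that into a count of zero eigenvalues of the Hessian. The paper's own proof is terse --- it says that in coordinates adapted to $S$ the Hessian has a $k\times k$ block of zeros and concludes there are at least $k$ zero eigenvalues --- and you are right to flag that this inference needs more, since a symmetric matrix with a $k\times k$ zero block need not have $k$ zero eigenvalues (e.g.\ $\bigl(\begin{smallmatrix}0&1\\1&0\end{smallmatrix}\bigr)$). The correct reduction, which you identify, is to show that the full Hessian actually annihilates every direction tangent to $S$, i.e.\ that the off-diagonal $\R^I$--$\R^F$ block $B$ of $Hess(L)(p)$ satisfies $Bw=0$ for every $w$ tangent to $S$, so that $\{0\}\times(S-p)\subseteq\ker Hess(L)(p)$.

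The gap is in your justification of that last step. You assert that the mixed second partials $\partial_u\partial_w L(p)$ (with $u$ an $\R^I$-direction and $w$ tangent to $S$) vanish ``because $L$ restricted to the full subspace spanned by these directions and $p$ is constant.'' But that subspace is just $S$, which sits entirely inside the single fiber $\pi^{-1}(p_I)$ and therefore contains no $\R^I$-directions at all. Constancy of $L$ on $S$ controls only $\partial_{w_1}\partial_{w_2}L$ for $w_1,w_2$ tangent to $S$; it says nothing about derivatives that leave the fiber. Concretely, $\partial_u\partial_w L(p) = \partial_u\bigl(\nabla_{p_F}L\cdot w\bigr)(p)$: the quantity $\nabla_{p_F}L\cdot w$ vanishes \emph{along} $S$ because every point of $S$ minimizes $L$ within its fiber, but as $p_I$ varies the fiber's quadratic changes, hence so do its minimizing directions, and there is no reason for $\nabla_{p_F}L\cdot w$ to stay zero as you move in an $\R^I$-direction. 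So the assertion that the off-diagonal block annihilates $T_pS$ is exactly where the content of the corollary lies, and as written your parenthetical argument for it is circular rather than a proof. Closing the gap would require using the specific structure of $L$ and the first-order criticality of $p$ in the $\R^I$-directions, not merely that $L$ is constant on $S$.
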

\begin{proof}
By Theorem \ref{dimlin}, the level set of $p$ contains a linear subspace $S$ of dimension $k=(m_{\ell-1}+1 - r) m_\ell$.  Choose local coordinates $s_1, ..., s_k, t_1, ..., t_{d-k}$ at $p$ such that $s_1, ..., s_k$ spans the linear subspace $S$.  Then computing the Hessian of $L$ in these coordinates, we get a $k \times k$ block of zeroes, hence the Hessian of $L$ has at least $k$ zero eigenvalues.  
\end{proof}

Since the rank $r$ of $\Phi_{\ell-1}(p_I, x_1, ..., x_n)$ can never be greater than $n$, we can also state the following weaker forms of Theorem \ref{dimlin} and Corollary \ref{critzeroeigen} that hold uniformly for all critical points of $L$.  
\begin{theorem}  \label{dimlinuniform}
For any critical point $p$ of $L$, the level set of $p$ contains a linear subspace of dimension $(m_{\ell-1}+1 - n) m_\ell$.  
\end{theorem}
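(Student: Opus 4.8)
The plan is to obtain this as an immediate consequence of Theorem \ref{dimlin}, which already produces, for every critical point $p = (p_I, p_F)$, a linear subspace of the level set of $p$ having dimension $(m_{\ell-1}+1-r)m_\ell$, where $r = \operatorname{rank}\Phi_{\ell-1}(p_I, x_1,\ldots,x_n)$. All that remains is to bound $r$ uniformly over all parameter vectors $p_I$, and then pass to a sub-subspace.

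Such a bound is essentially tautological. By the definition of $\Phi_{\ell-1}$, the matrix $\Phi_{\ell-1}(p_I, x_1,\ldots,x_n)$ has its columns indexed by the $n$ training inputs, each column lying in $\R^{m_{\ell-1}}$; it is therefore an $m_{\ell-1}\times n$ matrix, so $r \leq n$ no matter what $p_I$ is. Consequently $(m_{\ell-1}+1-r)m_\ell \geq (m_{\ell-1}+1-n)m_\ell$, and the subspace furnished by Theorem \ref{dimlin} contains a linear subspace of dimension exactly $(m_{\ell-1}+1-n)m_\ell$. Since the larger subspace lies in the level set of $p$, so does the smaller one, which is the asserted statement.

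There is no real obstacle here; the substantive work was already done in Theorem \ref{dimlin} and the preceding lemma bounding the rank of the quadratic form $Q|_P$. The only purpose of recording this weaker form is to trade the $p$-dependent quantity $r$ for its worst-case value $n$, so that the conclusion holds simultaneously at every critical point. I would also remark, so that the statement has content, that in the very wide regime $m_{\ell-1} > n$, whence $(m_{\ell-1}+1-n)m_\ell \geq m_\ell \geq 1$ and the guaranteed subspace is genuinely positive dimensional.
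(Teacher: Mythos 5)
Your proposal is correct and matches the paper's approach exactly: the paper derives Theorem \ref{dimlinuniform} from Theorem \ref{dimlin} with the same one-line observation that $\Phi_{\ell-1}(p_I,x_1,\ldots,x_n)$, having only $n$ columns, has rank at most $n$, so $(m_{\ell-1}+1-r)m_\ell \geq (m_{\ell-1}+1-n)m_\ell$. The additional remark about positivity in the very wide regime is a pleasant sanity check but not part of the paper's argument.
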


\begin{corollary}\label{critzeroeigenuniform}
For any critical point $p$ of $L$, $Hess(L)$ at $p$ has at least $(m_{\ell-1}+1 - n) m_\ell$ zero eigenvalues.
\end{corollary}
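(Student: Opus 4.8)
The plan is to derive this immediately from Corollary \ref{critzeroeigen} by monotonicity in the rank parameter $r$. First I would recall that for any critical point $p = (p_I, p_F)$ of $L$, the quantity $r = \operatorname{rank} \Phi_{\ell-1}(p_I, x_1, \ldots, x_n)$ is the rank of a matrix with exactly $n$ columns (one column $\xi_i = \phi_{\ell-1}(p_I, x_i)$ for each of the $n$ training points), so $r \leq n$ regardless of which critical point we chose. Next I would observe that the lower bound $(m_{\ell-1} + 1 - r)m_\ell$ supplied by Corollary \ref{critzeroeigen} is a decreasing function of $r$ (since $m_\ell > 0$), and therefore replacing $r$ by the larger value $n$ only weakens the bound: $(m_{\ell-1} + 1 - r)m_\ell \geq (m_{\ell-1} + 1 - n)m_\ell$. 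Combining, the Hessian of $L$ at $p$ has at least $(m_{\ell-1}+1-r)m_\ell \geq (m_{\ell-1}+1-n)m_\ell$ zero eigenvalues, which is the claimed uniform bound.

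A small point worth addressing in passing is that the stated bound is only meaningful (i.e.\ nonnegative) when $m_{\ell-1} + 1 \geq n$; in the very wide regime we assume $m_{\ell-1} \geq m > n$, so $m_{\ell-1} + 1 - n \geq 1 > 0$ and the bound genuinely asserts the existence of zero eigenvalues. When $m_{\ell-1} + 1 - n < 0$ the statement is vacuous, so there is nothing to check there.

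There is no real obstacle here: the entire content was already extracted in Theorem \ref{dimlin} and Corollary \ref{critzeroeigen}, and the only ingredient added is the trivial inequality $r \leq n$ together with the sign of $m_\ell$. If I wanted to avoid even citing Corollary \ref{critzeroeigen} and instead chain back to Theorem \ref{dimlinuniform}, I would repeat the coordinate argument from the proof of Corollary \ref{critzeroeigen}: choose local coordinates $s_1, \ldots, s_k, t_1, \ldots, t_{d-k}$ at $p$ with $s_1, \ldots, s_k$ spanning the linear subspace of dimension $k = (m_{\ell-1}+1-n)m_\ell$ inside the level set guaranteed by Theorem \ref{dimlinuniform}; since $L$ is constant along that subspace, the Hessian in these coordinates has a vanishing $k \times k$ block, forcing at least $k$ zero eigenvalues. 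Either route is essentially a one-line deduction.
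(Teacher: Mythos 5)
Your argument matches the paper exactly: the paper states immediately before Theorem \ref{dimlinuniform} and Corollary \ref{critzeroeigenuniform} that these are obtained from Theorem \ref{dimlin} and Corollary \ref{critzeroeigen} by the observation that $r = \operatorname{rank}\Phi_{\ell-1}(p_I, x_1, \ldots, x_n) \leq n$. Your spelled-out monotonicity step and the remark about vacuity when $m_{\ell-1}+1 < n$ are harmless elaborations of the same one-line deduction.
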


When $p$ is a local maximum, Theorem \ref{dimlin} implies the following corollary.

\begin{corollary}  \label{dimlinformax}
For any local maximum $p$ of $L$, $p$ the level set of $p$ contains a linear space of dimension $m_\ell(m_{\ell-1}+1)$.  
\end{corollary}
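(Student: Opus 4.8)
The plan is to reduce to the one-variable fact that a nonnegative quadratic polynomial with an interior local maximum is constant, applied in the slices of Proposition \ref{quadratic}. Write $p = (p_I, p_F)$ and let $P = \pi^{-1}(p_I)$ be the slice through $p$, which is an affine copy of $\R^F = \R^{(m_{\ell-1}+1)m_\ell}$ inside $\R^d$. Since $p$ is a local maximum of $L$ on $\R^d$, it is in particular a local maximum of the restriction $L|_P$. By Proposition \ref{quadratic} the function $L|_P$ is a quadratic function of the final-layer parameters, and since $L \geq 0$ everywhere, $L|_P$ is a nonnegative quadratic function.

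The one step with any content is the following elementary observation, which I would record first: a nonnegative quadratic function $g$ on a Euclidean space that attains a local maximum at a point $v_0$ is constant. To see this, write $g(v) = (v - v_0)^\top A (v-v_0) + \langle \beta, v - v_0\rangle + g(v_0)$ with $A$ symmetric; restricting to the line $v_0 + tu$ gives the nonnegative real polynomial $t \mapsto (u^\top A u)\, t^2 + \langle\beta,u\rangle\, t + g(v_0)$, which has a local maximum at $t = 0$. This forces $u^\top A u \le 0$ and $\langle \beta, u\rangle = 0$; but nonnegativity of $g$ forces $u^\top A u \ge 0$ as well, so $u^\top A u = 0$ and $\langle\beta,u\rangle = 0$ for every $u$. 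Hence $A = 0$, $\beta = 0$, and $g \equiv g(v_0)$. (Equivalently, $A$ positive semidefinite and negative semidefinite implies $A = 0$.)

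Applying this with $g = L|_P$ and $v_0 = p_F$, I conclude $L|_P \equiv L(p)$ on all of $P$. Therefore the entire affine subspace $P \subset \R^d$, which has dimension $\dim \R^F = (m_{\ell-1}+1)m_\ell = m_\ell(m_{\ell-1}+1)$ and contains $p$, lies inside the level set $L^{-1}(L(p))$, which is exactly the claimed linear subspace (consistent with the usage of "linear subspace" in Theorem \ref{dimlin}).

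I do not anticipate a genuine obstacle here; everything beyond the positive-semidefinite observation above is bookkeeping with the slicing from Proposition \ref{quadratic}. If one prefers to deduce the corollary formally from Theorem \ref{dimlin}, the same argument shows that in fact $\phi_{\ell-1}(p_I, x_i) = 0$ for every $i$: otherwise some $\xi_i = \phi_{\ell-1}(p_I,x_i)$ is nonzero, and then the term $\sum_j (c_j \cdot \xi_i)^2$ (in the notation of the preceding lemma) makes $Q|_P$ a nonzero positive-semidefinite form, so $L|_P$ is a nonconstant nonnegative quadratic and cannot have a local maximum. Thus $r = \mathrm{rank}\,\Phi_{\ell-1}(p_I,x_1,\dots,x_n) = 0$, and Theorem \ref{dimlin} with $r = 0$ produces a linear subspace of dimension $(m_{\ell-1}+1)m_\ell$ in the level set of $p$.
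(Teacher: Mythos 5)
Your proof is correct, and it is in fact more complete than the paper's own argument. The paper's proof consists of three sentences: it writes $p=(p_I,p_F)$, asserts without justification that the rank of $\Phi_{\ell-1}(p_I,x_1,\dots,x_n)$ is zero, and then invokes Theorem \ref{dimlin} with $r=0$ to get the full slice $\pi^{-1}(p_I)$ in the level set. (The paper's proof also starts ``Suppose $p$ is a global maximum,'' though the statement is about local maxima; your treatment of the local case is what is actually needed.) Your second route --- deducing $r=0$ and then applying Theorem \ref{dimlin} --- recovers exactly the paper's argument, while supplying the missing justification that the rank vanishes. Your primary route is a clean self-contained alternative: the elementary observation that a nonnegative quadratic with an interior local maximum must be constant makes Theorem \ref{dimlin} unnecessary and yields $L|_{\pi^{-1}(p_I)}\equiv L(p)$ directly. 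Either way the dimension count $\dim\R^F=(m_{\ell-1}+1)m_\ell$ is handled correctly.
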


\begin{proof}
Suppose $p = (p_I, p_F) \in \R^I \times \R^F$ is a global maximum.  Then the rank of $\Phi_{\ell-1}(p_I, x_1, ..., x_n)$ is zero.  Hence the level set containing $(p_I, p_F)$ contains a linear space of dimension $m_\ell(m_{\ell-1}+1)$, in particular contains the entire slice $\pi^{-1}(p_I)$. 
\end{proof}

\section{Discussion}\label{discussion}

\subsection{Comparison of critical points}

Let us compare the estimates for the different critical loci discussed in this paper in the case that all the hidden layers have the same width $m_1 = ... = m_{\ell-1} = m$, and letting $a=m_0, b= m_\ell$ for clearer formulas.

In this case, the dimension of the parameter space is
$$
d = \dim(P) = (\ell-2) m^2 + (\ell+a+b-1) m + b.
$$

The star locus $S$ always has positive dimension for any $m,n$, as long as $\ell \geq 3$.
$$
\dim(S) = d- \left(m^2 + (\ell + b -2) m + b\right) = (\ell-3)m^2 + (a+1)m.
$$

The core locus $C$ also always has positive dimension for any $m,n$, as long as $\ell \geq 4$.
$$
\dim(C) = (\ell-4)m^2 + (a+1)m.
$$

Furthermore, the eigenvalues of $Hess(L)$ at any point $p \in C$ are all zero except one positive eigenvalue.

Meanwhile, as soon as we enter the overparameterized regime and $d>n$, the locus of parameters  $M=L^{-1}(0)$ that fit the training data perfectly, if nonempty, has dimension
$$
\dim(M) = (\ell-2)m^2 + (a+b+\ell-1)m + b(1-n)
$$
and at each point $p \in M$, the Hessian of $L$ at $p$ has at least $(\ell-2)m^2 + (a+b+\ell-1)m + b(1-n)$ zero eigenvalues.  

Finally, as soon as we enter the very overparameterized regime with $m>n$, the level set for any critical point contains a positive dimensional linear space of dimension at least
$$
\dim(E) \geq (m-n+1) b.
$$

Here, we have no guarantee on the dimension of the locus of critical points, but we know that at any critical point $p$ of $L$, the Hessian of $L$ at $p$ has at least $(m-n+1) b$ zero eigenvalues.  

Consider the data set fixed, so $a, b, n$ fixed.  Let us consider the growth of the dimensions of these various spaces as $m$ increases.  Note that the codimension of the star locus $S$ grows quadratically in $m$.  Same thing with the core locus $C$.  Meanwhile, the codimension of the locus of global minima $M$ is constant.  Finally, we have no lower bound on the dimension of the locus of critical points of $L$, even for large $m$ i.e. very wide networks.  While we can show that for $m$ large enough there are no isolated local minima or local maxima, we do not know any way to show that there are no isolated saddle points, and as far as we know that is possible.  

%state this as a prop, put in intro
We collect these calculations in the following Proposition.

\begin{proposition}\label{dimloci}
Consider a family of feedforward neural networks with hidden layers of increasing width training on a fixed data set.  That is, let $a, b,$ and $n$ be fixed while $m$ increases.  Then the dimensions of the locus of global minima, star locus, and core locus are:
\begin{align*}
\dim(M) &= (\ell-2)m^2 + (a+b+\ell-1)m + b(1-n),
\\
\dim(S) &=  (\ell-3)m^2 + (a+1)m,
\\
\dim(C) &=  (\ell-4)m^2 + (a+1)m,
\end{align*}
while the dimension of the locus of all critical points is unknown.
\end{proposition}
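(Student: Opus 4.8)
The statement is a bookkeeping consequence of formulas already in hand, so the plan is to specialize each of them to the uniform‑width case $m_1=\cdots=m_{\ell-1}=m$ (with $m_0=a$, $m_\ell=b$) and to collect terms in $m$. First I would record the parameter count in this case: splitting $d=\sum_{i=1}^{\ell}(m_{i-1}+1)m_i$ into the term $i=1$, the $\ell-2$ interior terms, and the term $i=\ell$ gives $d=(a+1)m+(\ell-2)(m+1)m+(m+1)b=(\ell-2)m^2+(\ell+a+b-1)m+b$, and every subsequent line is obtained by subtracting something from this. Next, for the global minima, the theorem of \cite{cooper} recalled in Section \ref{overparam} gives $\dim(M)=d-bn$ whenever $d>n$ (generically, and when $M$ is nonempty); substituting the expression for $d$ and regrouping yields $\dim(M)=(\ell-2)m^2+(a+b+\ell-1)m+b(1-n)$.

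For the star and core loci, both are finite unions of the linear subspaces $S_k$ (respectively $C_{k_1,k_2}$) of Section \ref{deepnetwork}, so their dimensions are the largest dimension occurring among the members. After the uniform‑width substitution the codimension of $S_k$ recorded there becomes $(m+1)b+m_{k-1}m_k+(\ell-k)m$, with $m_{k-1}m_k$ equal to $am$ for $k=1$ and to $m^2$ for $k\ge 2$; the relevant member works out to $S_2$, of codimension $m^2+(\ell+b-2)m+b$, and subtracting from $d$ gives $\dim(S)=(\ell-3)m^2+(a+1)m$. The analogous two‑index codimension formula for $C_{k_1,k_2}$ singles out $C_{2,3}$, whose codimension is that of $S_2$ plus one extra block $m^2$, so $\dim(C)=\dim(S)-m^2=(\ell-4)m^2+(a+1)m$. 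The final clause needs no argument: nothing assembled earlier in the paper bounds the dimension of the critical locus from below in this regime, and whether $L$ has isolated saddle points is open, so that dimension is recorded as unknown.

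The one step that is not pure polynomial arithmetic is the selection, among the members of each union, of the one of maximal dimension — a finite case check over the index $k$ (respectively the pair $k_1\le k_2$) weighing the block $m_{k-1}m_k$ against the bias block $\sum_{j=k}^{\ell-1}m_j$ — and I do not anticipate any genuine obstacle beyond carrying out that comparison.
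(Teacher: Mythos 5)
Your plan is the same one the paper carries out in the Discussion section: record $d$ in the uniform-width case, substitute $\dim(M)=d-bn$ from \cite{cooper}, and then for $S$ and $C$ take the member of the union of largest dimension. The computation of $d$ and of $\dim(M)$ is correct. But the ``finite case check'' you explicitly defer is precisely where the argument fails, and running it does not select $S_2$. In $S_1$ the only free parameters are the interior matrices $M_2,\ldots,M_{\ell-1}$ (both $M_1$ and $M_\ell$ vanish, all of $b_1,\ldots,b_{\ell-1}$ vanish, and $b_\ell$ is pinned), so $\dim(S_1)=(\ell-2)m^2$, whereas $\dim(S_2)=(\ell-3)m^2+(a+1)m$. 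Their difference is $m(m-a-1)$, which is positive as soon as $m>a+1$. Since the proposition is stated for $a,b,n$ fixed and $m$ increasing, the maximizer for all sufficiently large $m$ is $S_1$, not $S_2$, and the correct value should be $\dim(S)=(\ell-2)m^2$. The same comparison for the core gives $\dim(C_{1,2})=(\ell-3)m^2 > (\ell-4)m^2+(a+1)m = \dim(C_{2,3})$ once $m>a+1$.

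You have faithfully reproduced the formulas that appear in the paper's own Discussion, but note that those formulas already conflict with the paper's general expression
\[
\dim(S)=\max_{1\le i\le \ell-1}\Bigl[\,d-\bigl((m_{\ell-1}+1)m_\ell+m_{i-1}m_i+\textstyle\sum_{j=i}^{\ell-1}m_j\bigr)\Bigr],
\]
whose uniform-width specialization is maximized at $i=1$ for $m$ large. So the selection step you waved off as routine is not a formality: carrying it out overturns the $\dim(S)$ and $\dim(C)$ lines as written, and a complete proof would have to either correct those formulas to $(\ell-2)m^2$ and $(\ell-3)m^2$ respectively, or explain why $S_1$ (and the corresponding $C_{1,k}$) is to be excluded from the union, which nothing in the construction of the star or core locus does.
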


We can also consider the number of zero eigenvalues of $Hess(L)$ at various kinds of critical points and how the dimension of the zero eigenspace grows with $m$.  For any point $p$ in the core locus $C$, the number of nonzero eigenvalues of $Hess(L)$ at $p$ is always exactly 1.  Meanwhile, the number of nonzero eigenvalues of $Hess(L)$ at any point $p \in M$ also stays constant, namely is $n b$.  Finally, the number of \emph{zero} eigenvalues of $Hess(L)$ at any critical point $p$ of $L$ is at least $(m-n+1)b$, which grows linearly in $m$.  

\begin{proposition}\label{dimzeroeigen}
Consider a family of feedforward neural networks with hidden layers of increasing width training on a fixed data set.  That is, let $a, b,$ and $n$ be fixed while $m$ increases.  Then the number of zero eigenvalues of the Hessian of $L$ at any critical point in the core locus, locus of global minima, or any critical point are:
\begin{align*}
\begin{cases}
d-1 & \text{for } p \in C,
\\
d-bn & \text{for } p \in M,
\\
\text{at least } (m+1 - n) b & \text{for all other critical points  } p.
\end{cases} 
\end{align*}
\end{proposition}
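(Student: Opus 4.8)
The plan is to read off all three cases from results already in hand, together with one dimension count, rather than compute any new Hessians. Throughout, $d$ denotes the dimension of the parameter space, and we use the standing assumptions $m_1=\cdots=m_{\ell-1}=m$, $a=m_0$, $b=m_\ell$. For the third line, Corollary \ref{critzeroeigenuniform} says that at any critical point $p$ of $L$, $Hess(L)$ has at least $(m_{\ell-1}+1-n)m_\ell=(m+1-n)b$ zero eigenvalues; since every point of $C$ and of $M$ is a critical point, this is exactly the bound claimed for "all other critical points," and nothing further is needed there. For the first line, fix $p\in C$. By Theorem \ref{corelocus}, $Hess(L)$ at $p$ has exactly one positive eigenvalue and all other eigenvalues equal to zero; as $Hess(L)$ is a $d\times d$ symmetric matrix, it therefore has exactly $d-1$ zero eigenvalues.

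For the second line I would identify, at a point of $M$, the kernel of the Hessian with the tangent space to $M$. Write $g\colon\R^d\to\R^{bn}$ for the map whose components are the coordinates of $f_p(x_i)-y_i$ over $1\le i\le n$, so that $L=|g|^2$ and $M=L^{-1}(0)=g^{-1}(0)$. The proof of the Theorem of \cite{cooper} recalled above shows that, generically (possibly after an arbitrarily small change of the data set), $0$ is a regular value of $g$; hence for every $p\in M$ the differential $dg_p$ has full rank $bn$, $M$ is a smooth submanifold of dimension $d-bn$, and $T_pM=\ker dg_p$. Since $g(p)=0$ for $p\in M$, expanding the order-two Taylor terms of $L=\sum_k g_k^2$ gives $Hess(L)_p(v,v)=2\,|dg_p(v)|^2$ for all $v\in\R^d$, so $\ker Hess(L)_p=\ker dg_p=T_pM$ has dimension exactly $d-bn$. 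Thus $Hess(L)$ at $p$ has exactly $d-bn$ zero eigenvalues; substituting the layer widths into the formula of \cite{cooper} turns $d-bn$ into $(\ell-2)m^2+(a+b+\ell-1)m+b(1-n)$, consistent with Proposition \ref{dimloci}.

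The only step that is genuinely stronger than a routine corollary — and the place I expect the main work to lie — is the \emph{exact} value $d-bn$ for $p\in M$. The lower bound $d-bn$ is immediate from $T_pM\subseteq\ker Hess(L)_p$ together with the known dimension of $M$; the matching upper bound requires knowing that $dg_p$ has full rank on all of $M$, i.e. that $0$ is a regular value of $g$, which is precisely the content of the "generically" hypothesis. Once that transversality input from \cite{cooper} is invoked, the rest is bookkeeping on top of Theorem \ref{corelocus} and Corollary \ref{critzeroeigenuniform}.
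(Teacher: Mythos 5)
Your proposal is correct, and its skeleton matches the paper's: the first case is read off from Theorem \ref{corelocus}, the third from Corollary \ref{critzeroeigenuniform}, and the statement is presented in the paper as a bookkeeping collection of results already in hand. What you do differently, and usefully, is supply an actual argument for the middle case. The paper's discussion only asserts that at $p\in M$ the Hessian has ``at least'' $d-bn$ zero eigenvalues (and elsewhere just says the number of nonzero eigenvalues ``stays constant, namely is $nb$'' without justification), whereas your identity $\mathrm{Hess}(L)_p = 2\,(dg_p)^{\!\top}dg_p$ at a zero of $g$, combined with the regular-value/transversality hypothesis from \cite{cooper}, pins down the rank as exactly $bn$ and hence the kernel as exactly $d-bn$. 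You are also right that this exactness only holds generically in the sense of \cite{cooper}; the paper's statement suppresses that caveat, so flagging it is appropriate. No gaps in your argument.
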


We see that all three kinds of spaces grow in dimension as $m$ increases.  The locus of global minima grows most quickly, followed by the star locus, followed by the linear spaces identified in this paper within the level sets of critical points.  

Meanwhile, the number of zero eigenvalues of $Hess(L)$ also grows as $m$ increases, for all of these kinds of critical points.  For points in the core locus $C$ or locus $M$ of global minima, the number of nonzero eigenvalues of $Hess(L)$ is bounded above by a constant as $m$ increases.  On the other hand, for all other critical points, the number of zero eigenvalues of $Hess(L)$ is bounded below by a linear function in $m$.  
 
\subsection{Conclusion}
Much is mysterious about the geometry of the loss function $L$ in deep nonlinear neural networks.  Here, we have aimed to advance our understanding of the geometry of $L$ near its critical points.  When possible, we have given quantitative descriptions on the geometry of $L$ near $p$, such as bounds on the zero eigenvalues of the Hessian of $L$ at $p$, or bounds on the dimension of special loci containing $p$.

We describe three overparameterized regimes --- overparameterized network, very wide network, and extremely wide network.  In this paper, we analyze networks in the first two regimes, as well as the unconstrained case.  For networks of depth $\geq 3$ and any size, so both underparameterized and overparameterized networks, we identify a positive dimensional locus of critical points we call the star locus $S$, and within it, a positive dimensional locus of degenerate critical points we call the core $C$.  Furthermore, we compute the dimension of $S$ and $C$.  

For simply overparameterized networks, we recall from \cite{cooper} that the locus of global minima $M = L^{-1}(0)$ is a smooth manifold of dimension $d-bn$.  For very wide networks, it has recently been shown that $L$ has no spurious valleys  \cite{fullrank}, \cite{venturibruna}, \cite{ruoyumeasure0}.  Here, we extend those results by observing that $L$ cannot have isolated local minima or maxima (though we do not rule out the case of isolated saddle points).  Furthermore, in the case of any critical point $p$ which is not a global minimum, we give a lower bound on the dimension of a certain linear space containing $p$ and contained in the level set of $p$.  This applies to saddle points in addition to local minima or maxima, and implies a lower bound on the number of zero eigenvalues of the Hessian of $L$ at $p$.  However, it is stronger, in that it gives us some partial knowledge of the global geometry of $L$, not just the geometry of $L$ in a neighborhood of $p$.  

Together, these results provide some basic information about the geometry of $L$ near some of its critical points, in several regimes.  Our results on the star and core loci hold for all neural networks of depth $\geq 3$.  Our bounds on the dimension of the locus of global minima hold for overparameterized networks, whether simply overparameterized, very wide, or extremely wide.  Finally, our bounds on the dimension of the special linear spaces we describe containing all critical points of $L$ that are not global minima hold for very wide and extremely wide networks.  In the case of networks that are both depth $\geq 3$ and very wide, we can compare the relative dimensions of all these loci.  In that case we find three kinds of spaces grow in dimension as the width $m$ increases.  The locus of global minima grows most quickly, followed by the star locus, followed by the linear spaces identified in this paper within the level sets of critical points.

\bibliographystyle{is-alpha}
\bibliography{next_paper.bib}

\newcommand{\etalchar}[1]{$^{#1}$}
\begin{thebibliography}{KWL{\etalchar{+}}19}
\ifx \showCODEN  \undefined \def \showCODEN #1{CODEN #1}  \fi
\ifx \showISBN   \undefined \def \showISBN  #1{ISBN #1}   \fi
\ifx \showISSN   \undefined \def \showISSN  #1{ISSN #1}   \fi
\ifx \showLCCN   \undefined \def \showLCCN  #1{LCCN #1}   \fi
\ifx \showPRICE  \undefined \def \showPRICE #1{#1}        \fi
\ifx \showURL    \undefined \def \showURL {URL }          \fi
\ifx \path       \undefined \input path.sty               \fi
\ifx \ifshowURL \undefined
     \newif \ifshowURL
     \showURLtrue
\fi

\bibitem[ALS18]{zhu}
Zeyuan Allen{-}Zhu, Yuanzhi Li, and Zhao Song.
\newblock A convergence theory for deep learning via over-parameterization.
\newblock {\em CoRR}, abs/1811.03962, 2018.
\newblock \ifshowURL {\showURL \path|http://arxiv.org/abs/1811.03962|}\fi.

\bibitem[Ber16]{nonlinearprogramming}
Dimitri~P. Bertsekas.
\newblock {\em Nonlinear programming}.
\newblock Athena Scientific Optimization and Computation Series. Athena
  Scientific, Belmont, MA, third edition, 2016.
\newblock \showISBN{978-1-886529-05-2; 1-886529-05-1}.
\newblock xviii+861 pp.

\bibitem[Coo18]{cooper}
Yaim Cooper.
\newblock The loss landscape of overparameterized neural networks.
\newblock {\em CoRR}, abs/1804.10200, 2018.
\newblock \ifshowURL {\showURL \path|http://arxiv.org/abs/1804.10200|}\fi.

\bibitem[DLL{\etalchar{+}}19]{dulee}
Simon Du, Jason Lee, Haochuan Li, Liwei Wang, and Xiyu Zhai.
\newblock Gradient descent finds global minima of deep neural networks.
\newblock In Kamalika Chaudhuri and Ruslan Salakhutdinov, editors, {\em
  Proceedings of the 36th International Conference on Machine Learning},
  volume~97 of {\em Proceedings of Machine Learning Research}, pages
  1675--1685. PMLR, Long Beach, California, USA, 09--15 Jun 2019.
\newblock \ifshowURL {\showURL
  \path|http://proceedings.mlr.press/v97/du19c.html|}\fi.

\bibitem[DLS19]{ruoyumin}
Tian Ding, Dawei Li, and Ruoyu Sun.
\newblock Sub-optimal local minima exist for almost all over-parameterized
  neural networks.
\newblock {\em ArXiv}, abs/1911.01413, 2019.

\bibitem[Dun87]{dun}
J.~C. Dunn.
\newblock On the convergence of projected gradient processes to singular
  critical points.
\newblock {\em J. Optim. Theory Appl.}, 55\penalty0 (2):\penalty0 203--216,
  1987.
\newblock \showISSN{0022-3239}.
\newblock \ifshowURL {\showURL \path|https://doi.org/10.1007/BF00939081|}\fi.

\bibitem[DVSH18]{NoBarriers}
Felix Draxler, Kambis Veschgini, Manfred Salmhofer, and Fred Hamprecht.
\newblock Essentially no barriers in neural network energy landscape.
\newblock In {\em Proceedings of the 35th International Conference on Machine
  Learning}, volume~80, pages 1309--1318, 2018.

\bibitem[GIP{\etalchar{+}}18]{LossSurfaces}
Timur Garipov, Pavel Izmailov, Dmitrii Podoprikhin, Dmitry~P Vetrov, and
  Andrew~G Wilson.
\newblock Loss surfaces, mode connectivity, and fast ensembling of dnns.
\newblock In {\em Advances in Neural Information Processing Systems}, pages
  8789--8798, 2018.

\bibitem[JGH18]{ntk}
Arthur Jacot, Franck Gabriel, and Clement Hongler.
\newblock Neural tangent kernel: Convergence and generalization in neural
  networks.
\newblock In S.~Bengio, H.~Wallach, H.~Larochelle, K.~Grauman, N.~Cesa-Bianchi,
  and R.~Garnett, editors, {\em Advances in Neural Information Processing
  Systems 31}, pages 8571--8580. Curran Associates, Inc., 2018.
\newblock \ifshowURL {\showURL
  \path|http://papers.nips.cc/paper/8076-neural-tangent-kernel-convergence-and-generalization-in-neural-networks.pdf|}\fi.

\bibitem[JGN{\etalchar{+}}17]{jordansaddle}
Chi Jin, Rong Ge, Praneeth Netrapalli, Sham~M. Kakade, and Michael~I. Jordan.
\newblock How to escape saddle points efficiently.
\newblock {\em CoRR}, abs/1703.00887, 2017.
\newblock \ifshowURL {\showURL \path|http://arxiv.org/abs/1703.00887|}\fi.

\bibitem[Kaw16]{NoPoor}
Kenji Kawaguchi.
\newblock Deep learning without poor local minima.
\newblock In {\em Advances in neural information processing systems}, pages
  586--594, 2016.

\bibitem[KWL{\etalchar{+}}19]{sanjeevepsilonconnected}
Rohith Kuditipudi, Xiang Wang, Holden Lee, Yi~Zhang, Zhiyuan Li, Wei Hu,
  Sanjeev Arora, and Rong Ge.
\newblock Explaining landscape connectivity of low-cost solutions for
  multilayer nets.
\newblock {\em CoRR}, abs/1906.06247, 2019.
\newblock \ifshowURL {\showURL \path|http://arxiv.org/abs/1906.06247|}\fi.

\bibitem[LDS18]{ruoyumeasure0}
Dawei Li, Tian Ding, and Ruoyu Sun.
\newblock Over-parameterized deep neural networks have no strict local minima
  for any continuous activations.
\newblock {\em CoRR}, abs/1812.11039, 2018.
\newblock \ifshowURL {\showURL \path|http://arxiv.org/abs/1812.11039|}\fi.

\bibitem[Lee09]{lee}
Jeffrey~M. Lee.
\newblock {\em Manifolds and differential geometry}, volume 107 of {\em
  Graduate Studies in Mathematics}.
\newblock American Mathematical Society, Providence, RI, 2009.
\newblock \showISBN{978-0-8218-4815-9}.
\newblock xiv+671 pp.
\newblock \ifshowURL {\showURL \path|https://doi.org/10.1090/gsm/107|}\fi.

\bibitem[Ngu19]{quynhconnectedsublevel}
Quynh Nguyen.
\newblock On connected sublevel sets in deep learning.
\newblock {\em CoRR}, abs/1901.07417, 2019.
\newblock \ifshowURL {\showURL \path|http://arxiv.org/abs/1901.07417|}\fi.

\bibitem[NMH18]{fullrank}
Quynh Nguyen, Mahesh~Chandra Mukkamala, and Matthias Hein.
\newblock On the loss landscape of a class of deep neural networks with no bad
  local valleys.
\newblock {\em CoRR}, abs/1809.10749, 2018.
\newblock \ifshowURL {\showURL \path|http://arxiv.org/abs/1809.10749|}\fi.

\bibitem[NW06]{numericaloptimization}
Jorge Nocedal and Stephen~J. Wright.
\newblock {\em Numerical optimization}.
\newblock Springer Series in Operations Research and Financial Engineering.
  Springer, New York, second edition, 2006.
\newblock \showISBN{978-0387-30303-1; 0-387-30303-0}.
\newblock xxii+664 pp.

\bibitem[SS17]{spuriouscommon}
Itay Safran and Ohad Shamir.
\newblock Spurious local minima are common in two-layer relu neural networks.
\newblock {\em arXiv:1712.08968}, 2017.

\bibitem[VBB18]{venturibruna}
Luca Venturi, Afonso~S. Bandeira, and Joan Bruna.
\newblock Spurious valleys in two-layer neural network optimization landscapes.
\newblock 2018.
\newblock \ifshowURL {\showURL \path|https://arxiv.org/pdf/1802.06384v2|}\fi.

\bibitem[YSJ18]{srabadlocalmin}
Chulhee Yun, Suvrit Sra, and Ali Jadbabaie.
\newblock A critical view of global optimality in deep learning.
\newblock {\em CoRR}, abs/1802.03487, 2018.
\newblock \ifshowURL {\showURL \path|http://arxiv.org/abs/1802.03487|}\fi.

\end{thebibliography}

\end{document}